%% file: main.tex
\documentclass{article}

\usepackage{microtype}
\usepackage{graphicx}
\usepackage{subcaption}
\usepackage{booktabs} 
\usepackage{lipsum}
\usepackage{hyperref}
\usepackage[most]{tcolorbox}   
\usepackage{listings}
\usepackage{subcaption}        

\renewcommand{\footnote}[1]{}
\newcommand{\edit}[1]{\textcolor{black}{#1}}
\newtcblisting{naracode}{%
  listing only, enhanced, breakable,
  colback=green!5!white, colframe=green!40!black, boxrule=0.6pt, arc=3pt,
  title={\small\bfseries LLaDA+NaRA},
  listing options={language=Python, basicstyle=\ttfamily\scriptsize,
    breaklines=true, showstringspaces=false,
    commentstyle=\color{gray}\itshape}
}
\newtcblisting{loracode}{%
  listing only, enhanced, breakable,
  colback=red!5!white, colframe=red!40!black, boxrule=0.6pt, arc=3pt,
  title={\small\bfseries LLaDA+LoRA},
  listing options={language=Python, basicstyle=\ttfamily\scriptsize,
    breaklines=true, showstringspaces=false,
    commentstyle=\color{gray}\itshape}
}
\newtcolorbox{narabox}{%
  enhanced, breakable,
  colback=green!5!white, colframe=green!40!black, boxrule=0.6pt, arc=3pt,
  title={\small\bfseries LLaDA+NaRA}
}
\newtcolorbox{lorabox}{%
  enhanced, breakable,
  colback=red!5!white, colframe=red!40!black, boxrule=0.6pt, arc=3pt,
  title={\small\bfseries LLaDA+LoRA}
}
\usepackage{enumitem}

\usepackage[accepted]{icml2026}

\usepackage{amsmath}
\usepackage{amssymb}
\usepackage{mathtools}
\usepackage{amsthm}
\usepackage{multirow}

\usepackage[capitalize,noabbrev]{cleveref}

\theoremstyle{plain}
\newtheorem{theorem}{Theorem}[section]

\theoremstyle{definition}

\theoremstyle{remark}

\usepackage[textsize=tiny]{todonotes}

\icmltitlerunning{NaRA: Noise-Aware LoRA for Parameter-Efficient Fine-Tuning of Diffusion LLMs}

\begin{document}

\twocolumn[
  \icmltitle{NaRA: Noise-Aware LoRA for Parameter-Efficient \\ Fine-Tuning of Diffusion LLMs}



  \icmlsetsymbol{equal}{*}

  \begin{icmlauthorlist}
    \icmlauthor{Shuaidi Wang}{sustech}
    \icmlauthor{Zhan Zhuang}{sustech,cityuhk}
    \icmlauthor{Ruping Huang}{sustech,usthk}
    \icmlauthor{Yu Zhang}{sustech}

  \end{icmlauthorlist}

    \icmlaffiliation{sustech}{Department of Computer Science and Engineering, Southern University of Science and Technology, Shenzhen, China}
    \icmlaffiliation{cityuhk}{Department of Computer Science, City University of Hong Kong, Hong Kong, China}
    \icmlaffiliation{usthk}{Department of Computer Science and Engineering, Hong Kong University of Science and Technology, Hong Kong, China}

  \icmlcorrespondingauthor{Yu Zhang}{yu.zhang.ust@gmail.com}

  \icmlkeywords{Machine Learning, ICML}

  \vskip 0.3in
]



\printAffiliationsAndNotice{}  

\input{sections/01abstract}
\input{sections/02introduction}

\input{sections/03relatedwork}
\input{sections/04preliminaries}
\input{sections/05Method}
\input{sections/06experiments}
\input{sections/07ablation_studies}
\input{sections/08conclusion}




\section*{Acknowledgements}

This work was supported by National Natural Science Foundation of China under Grant no. 62136005 and Shenzhen fundamental research program JCYJ20250604144724032.


\section*{Impact Statement}

This paper presents work whose goal is to advance the field
of Machine Learning. There are many potential societal
consequences of our work, none which we feel must be
specifically highlighted here.


\bibliography{example_paper}
\bibliographystyle{icml2026}

\newpage
\appendix
\onecolumn
\input{sections/appendix}

\end{document}

%% file: sections/01abstract.tex
\begin{abstract}
Diffusion Large Language Models (dLLMs) have emerged as a promising non-autoregressive generative paradigm.
Given the prohibitive computational cost of full fine-tuning, Parameter-Efficient Fine-Tuning (PEFT) has become the standard approach.
However, existing PEFT methods (\textit{e.g.}, LoRA), originally tailored for autoregressive models, rely on static parameters that are agnostic to the noise level.
Consequently, they ignore the intrinsic dynamics of the diffusion process, where input distributions and generation difficulty shift significantly along the denoising trajectory, rendering them suboptimal for dLLMs.
To address this, we propose \textbf{N}oise-\textbf{a}ware Low-\textbf{R}ank \textbf{A}daptation (NaRA), which introduces a low-rank core matrix generated by a lightweight, globally shared hypernetwork conditioned on the noise level. This design enables the update matrices to vary continuously along the diffusion process while keeping parameter and latency overhead negligible.
We provide a theoretical justification for the proposed NaRA framework and empirically demonstrate consistent improvements over 
noise-agnostic baselines across commonsense reasoning, mathematical reasoning, and code generation benchmarks. 
\edit{Our code is available at \href{https://github.com/generaldi/NaRA}{https://github.com/generaldi/NaRA}}.
\end{abstract}

%% file: sections/02introduction.tex
\section{Introduction}
Large language models (LLMs) have transformed natural language processing, 
predominantly relying on the autoregressive (AR) paradigm for sequential, token-by-token generation~\citep{chatgpt2022,achiam2023gpt4,qwen2.5, yang2025qwen3}. 
Despite their competitive performance, AR models are constrained by their 
inherent sequential nature, which limits inference parallelism and restricts the attention mechanism to unidirectional context. 
Diffusion large language models (dLLMs)~\citep{geminidiffusion,inception2025mercury,nie2025large, zhu2025llada15,zhu2025lladamoesparsemoediffusion,ye2025dream} 
have emerged as an alternative to address these limitations. 
Formulated as masked diffusion models~\citep{austin2021structured,lou2024discretediffusionmodelingestimating}, 
dLLMs diverge from the AR paradigm by employing a multi-step denoising process. 
By iteratively refining a masked sequence with bidirectional attention~\citep{vaswani2017attention}, dLLMs support parallel token generation and global planning over the whole context~\citep{zhao2025d1}, while offering a tunable computation-quality trade-off through the denoising process~\citep{israel2025accelerating}. 

\begin{figure}[t]
  \begin{center}\centerline{\includegraphics[width=0.9\columnwidth]{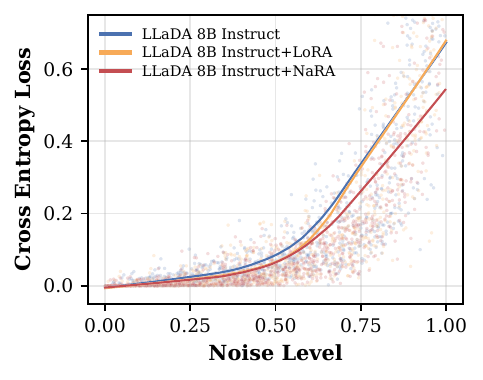}}
  \vskip -0.1in
    \caption{
        The cross-entropy loss of LLaDA~\citep{nie2025large} across noise levels. Scatter points denote per-example losses and solid curves show smoothed trends estimated via Locally Weighted Scatterplot Smoothing (LOWESS). LoRA yields most of its gain at mid-noise levels, whereas NaRA consistently reduces loss across a broader range of noise levels by adapting to denoising dynamics.
    }
    \vskip -0.3in
    \label{fig:loss_vs_noise_curve}
  \end{center}
\end{figure}

Fully fine-tuning these large pre-trained dLLMs for downstream tasks remains 
computationally prohibitive~\citep{hu2022lora,dettmers2023qlora}. A closely related limitation has been observed in autoregressive LLMs, where Parameter-Efficient Fine-Tuning (PEFT) has emerged as the dominant paradigm for adaptation. PEFT strategies~\citep{lester2021power, liu2022p, liu2024dora,chen2025alternating,huang2025hira} 
adapt pre-trained models by updating only a small set of additional parameters, thereby bypassing the computational and storage bottlenecks of full fine-tuning. Among them, Low-Rank Adaptation (LoRA)~\citep{hu2022lora}  has gained widespread adoption in LLMs due to its strong empirical performance and simplicity. Motivated by these successes, a natural strategy is to naively extend such efficient techniques to dLLMs. 

Despite its efficiency, standard LoRA is structurally mismatched to dLLMs because it imposes a \emph{noise-agnostic} low-rank update throughout the entire denoising process. This is suboptimal, as the inherently noise-dependent nature of diffusion models is ignored~\citep{balaji2022ediff,choi2022perception}. 
We empirically substantiate this effect on LLaDA~\citep{nie2025large} for mathematical reasoning tasks, with experimental details provided in Appendix~\ref{app:noise_loss_analysis}. As illustrated in Figure~\ref{fig:loss_vs_noise_curve}, while standard LoRA effectively reduces cross-entropy loss in low-to-medium noise regimes, it yields negligible improvement at high noise levels when compared to the baseline.A primary reason for this phenomenon is that standard LoRA employs a fixed set of adapter weights throughout the entire denoising process. This approach neglects the significant shifts in prediction scope and modeling difficulty that occur in diffusion models. Specifically, standard LoRA was originally designed for autoregressive models where the objective remains consistent, namely predicting the probability of the next single token. In contrast, dLLMs require modeling the probabilities of a varying number of target tokens throughout the denoising process. Furthermore, since the number of mask tokens decreases progressively throughout the denoising process, the semantic information within the input evolves significantly across the denoising trajectory, leading to considerable variations in reconstruction difficulty. Consequently, the observed failure of static adapters to generalize across all noise levels underscores a fundamental limitation: a one-size-fits-all update is insufficient for dLLMs. In light of these factors, designing adaptive update weights tailored to the specific demands of each noise level represents a more theoretically grounded and effective strategy for dLLMs.

A natural way to achieve the noise-awareness for LoRA is to train distinct LoRA adapters for different timesteps~\citep{zhuang2025timestep}.
 However,
this is parameter-inefficient and ignores shared structure across noise levels. \edit{We empirically support this with a Multi-LoRA baseline. This baseline splits the noise range into four intervals. Even though it uses about four times more parameters than standard LoRA, it still underperforms standard LoRA, as discussed in Section~\ref{sec:comparison_with_multi_lora}.}
To bypass these issues, we propose \textbf{N}oise-\textbf{a}ware Low-\textbf{R}ank \textbf{A}daptation (NaRA). 
Inspired by~\citet{zhuang2024timevarying}, NaRA replaces the fixed LoRA adapter with a dynamic structure that inserts a noise-aware core matrix between two static projection matrices. 
To generate the core matrix, a lightweight, globally shared hypernetwork is used based on the noise level.
Consequently, the weight updates in NaRA vary continuously across different noise levels.
This design effectively addresses the aforementioned challenges. 
By modeling the core matrix as a continuous function, NaRA naturally captures dependencies across noise levels through the diffusion process in dLLMs without training disjoint adapters. 

We theoretically establish that multiple distinct update matrices can be equivalently represented within our proposed decomposed framework.
Nevertheless, we empirically validate its efficacy through extensive experiments on the LLaDA family, utilizing both Base and Instruct models. Our evaluation spans three distinct domains, including commonsense reasoning, mathematical reasoning, and code generation. The results demonstrate that NaRA consistently achieves state-of-the-art performance across diverse benchmarks while maintaining negligible computational overhead. Furthermore, our analysis confirms that the hypernetwork effectively modulates the update magnitude in correlation with the noise level, validating the necessity of noise-aware adaptation in dLLMs.

Our main contributions are summarized as follows.

\begin{itemize}
\item (\emph{Problem Identification})
To the best of our knowledge, we are the first to identify the limitation of applying standard PEFT methods, originally designed for autoregressive models, to dLLMs. We point out that these standard PEFT methods fail to account for the inherently noise-dependent nature of the diffusion process.

\item (\emph{The NaRA Framework}) 
We propose \text{NaRA}, the first noise-aware PEFT framework tailored for dLLMs. By utilizing a lightweight hypernetwork to generate a core matrix based on the noise level, NaRA captures noise-dependent dynamics with negligible parameter overhead.

\item (\emph{Theoretical and Empirical Validation})
We provide a theoretical analysis demonstrating NaRA's expressive power. Furthermore, extensive experiments and ablation studies across multiple benchmarks show that NaRA consistently outperforms standard LoRA and other noise-agnostic PEFT baselines.

\end{itemize}

%% file: sections/03relatedwork.tex
\section{Related work}

\textbf{Diffusion Models for Language Generation.}
Recent dLLMs, including LLaDA~\citep{nie2025large}, have reached competitive performance against autoregressive LLMs~\citep{geminidiffusion,inception2025mercury,song2025seed,ye2025dream,zhu2025lladamoesparsemoediffusion}.
A growing body of research has investigated post-training strategies to unlock specific capabilities, including reinforcement learning for reasoning~\citep{huang2025reinforcing,pan2025d,tang2025wd1,wang2025spg,zhu2025llada15,zhao2025d1,zekri2025fine} and supervised fine-tuning for domain adaptation~\citep{xie2025dream,gong2025diffucoder}. While these frameworks utilize PEFT to reduce computational costs, they predominantly apply standard LoRA without modification, thereby treating the adaptation process as static. However, this static treatment overlooks the intrinsic dynamics of masked diffusion, which is what the proposed NaRA method aims to address. 

\textbf{Dynamic Adaptation in PEFT.}
For autoregressive models, static PEFT methods have proven highly effective by learning a constant adaptation weight~\citep{hu2022lora,liu2024dora,meng2024pissa,huang2025hira,wang2026mosa}. 
However, this static assumption yields suboptimal results in diffusion models~\citep{zhang2023improving,ganjdanesh2024mixture,zhuang2024timevarying,ma2025decouple,zhuang2025timestep}, where the input distribution dynamically shifts across the denoising trajectory. While ~\citet{zhuang2025timestep} has attempted to address this by partitioning timesteps into intervals and employing LoRA-based Mixture-of-Experts (MoE), such approaches introduce significant training difficulties and structural complexity. 

%% file: sections/04preliminaries.tex
\section{Preliminaries}
\label{sec:preliminaries}

\subsection{Masked Diffusion LLM}
\label{subsec:sft_dllms}

Consider a training instance $\mathbf{x}_0 = (\mathbf{p}, \mathbf{r}_0)$, where the condition prompt $\mathbf{p}$ and the target response $\mathbf{r}_0$ are sequences of discrete tokens.
The forward process gradually corrupts the response $\mathbf{r}_0$ into noise by replacing tokens with a special symbol \texttt{[MASK]}.
Formally, at an arbitrary timestep $t \in (0, 1]$, we sample a corrupted state $\mathbf{r}_t$ by independently masking each token in $\mathbf{r}_0$ with probability $t$. The transition probability is defined as
\begin{equation}
    q(r_t^{(i)} \mid r_0^{(i)}) = 
    \begin{cases} 
    t & \text{if } r_t^{(i)} = \texttt{[MASK]}, \\
    1 - t & \text{if } r_t^{(i)} = r_0^{(i)},
    \end{cases}
\label{eq:forward_process}
\end{equation}
where $r_t^{(i)}$ denotes the token at the $i$-th position in $\mathbf{r}_t$. Given that the condition prompt remains uncorrupted, the effective noise level is determined solely by the target response. We formally define the noise level $\lambda = m / L_s \in [0, 1]$, where $m$ is the count of masked tokens and $L_s$ denotes the length of the response segment.

Conversely, the generative reverse process employs a neural network with parameters $\theta$ to estimate the conditional probability $p_\theta(r_0^{(i)}  \mid \mathbf{p}, \mathbf{r}_t)$ of the original tokens at these masked positions.
To achieve this, the model is trained using a re-weighted cross-entropy loss over the masked tokens
\begin{equation}
    \mathcal{L}(\theta) = \mathbb{E}_{t, \mathbf{p}, \mathbf{r}_0, \mathbf{r}_t} \left[ \frac{1}{t} \sum_{i \in \mathcal{M}_t} -\log p_\theta(r_{0}^{(i)}  \mid \mathbf{p}, \mathbf{r}_t) \right],
    \label{eq:dllm_loss}
\end{equation}
where $\mathcal{M}$ indicates the set of masked indices in $\mathbf{r}_t$. Theoretically, Eq.~\eqref{eq:dllm_loss} constitutes a variational upper bound on the negative log-likelihood of the data~\citep{shi2024simplified,ou2024your}.

\subsection{Low-Rank Adaptation (LoRA)}
\label{subsec:lora}

LoRA~\citep{hu2022lora} is a parameter-efficient fine-tuning technique based on the hypothesis that weight updates in pre-trained models reside in a low-dimensional subspace.
Consider a weight matrix $\mathbf{W}_0 \in \mathbb{R}^{d \times k}$. LoRA parametrizes the weight update $\Delta \mathbf{W}$ by decomposing it into the product of two low-rank matrices $\mathbf{B} \in \mathbb{R}^{d \times r}$ and $\mathbf{A} \in \mathbb{R}^{r \times k}$, where the rank $r \ll \min(d, k)$.
The modified forward pass maps an input $\mathbf{x}\in\mathbb{R}^{k}$ to the layer output $\mathbf{h}\in\mathbb{R}^{d}$ as
\begin{equation}
    \mathbf{h} = \mathbf{W}_0 \mathbf{x} + \Delta \mathbf{W} \mathbf{x} = \mathbf{W}_0 \mathbf{x} + \mathbf{B}\mathbf{A}\mathbf{x}.\label{eq:lora_def}
\end{equation}
\vspace{-8pt}

%% file: sections/05Method.tex
\begin{figure*}[ht]
  \begin{center}
    \centerline{\includegraphics[width=1.85\columnwidth]{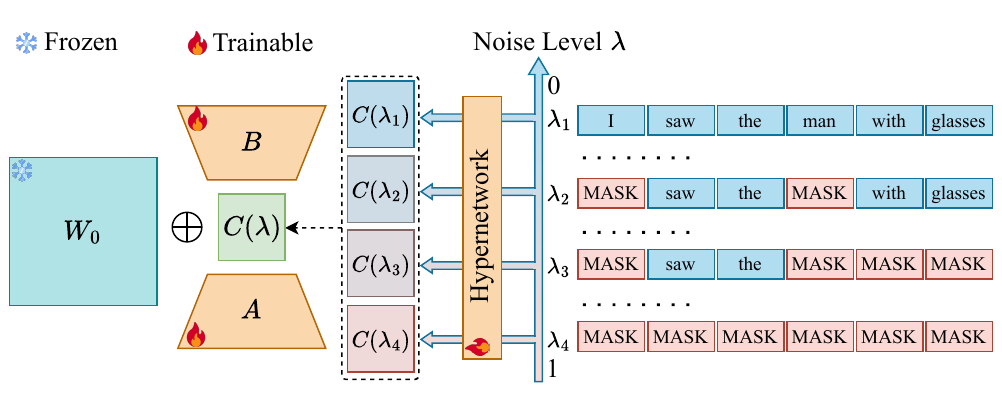}}
    \vskip -0.15in
\caption{Illustration of the architecture in NaRA. 
    The framework dynamically modulates weight updates according to the noise level $\lambda$. 
    Given the noise level $\lambda$ at each denoising step during generation, a hypernetwork generates the dynamic core matrix $\mathbf{C}(\lambda)$, 
    which is then integrated into the low-rank structure, sandwiched between static projection matrices $\mathbf{B}$ and $\mathbf{A}$. 
    }
    \vskip -0.2in
    \label{fig:nara_framework}
  \end{center}
\end{figure*}

\section{Methodology}
\label{sec:method}

In this section, we present \textbf{NaRA}, a noise-aware PEFT method for dLLMs. We detail its dynamic low-rank formulation, a lightweight hypernetwork that enables adaptation across noise levels, theoretical analysis, training and inference details.

\subsection{Noise-Aware Low-Rank Adaptation}
\label{subsec:dynamic_lora}

We propose NaRA to incorporate dynamic adaptation capabilities directly into the low-rank bottleneck. Inspired by~\citet{zhuang2024timevarying}, rather than training separate adapters for distinct noise level intervals, we employ a single dynamic core matrix $\mathbf{C}(\lambda) \in \mathbb{R}^{r \times r}$ positioned between two static update matrices, where $\lambda$ is the noise level. 
Specifically, for an input $\mathbf{x}$ processed at a specific $\lambda$, NaRA computes 
\begin{equation}
    \mathbf{h} = \mathbf{W}_0 \mathbf{x} + \Delta \mathbf{W}(\lambda) \mathbf{x} = \mathbf{W}_0 \mathbf{x} + \mathbf{B} \mathbf{C}(\lambda) \mathbf{A} \mathbf{x}.
    \label{eq:nara_formulation}
\end{equation}
In this formulation, the matrices $\mathbf{A} \in \mathbb{R}^{r \times k}$ and $\mathbf{B} \in \mathbb{R}^{d \times r}$ remain static, serving the same role as in standard LoRA. However, unlike standard LoRA, our approach introduces $\mathbf{C}(\lambda)$ to explicitly model the dependency of update matrices $\Delta \mathbf{W}(\lambda)$ on the noise level. Specifically, through the matrix multiplication $\mathbf{B} \mathbf{C}(\lambda) \mathbf{A}$, the noise-dependent core matrix $\mathbf{C}(\lambda)$ effectively controls the magnitude and direction of the entire update matrix $\Delta \mathbf{W}(\lambda)$, enabling NaRA to dynamically adjust its behavior according to $\lambda$. 

Although update matrices $\mathbf{A}$ and $\mathbf{B}$ are shared across different noise levels, the incorporation of $\mathbf{C}(\lambda)$ enables NaRA to achieve an expressive power comparable to training independent LoRAs. The following theorem formalizes this equivalence, guaranteeing that NaRA can represent a set of LoRA update matrices. We defer the full proof to Appendix~\ref{app:proof_equivalence}.


\begin{theorem}
\label{thm:equivalence}
Given a set of $N$ arbitrary weight update matrices $\mathcal{W} = \{ \Delta \mathbf{W}_1, \dots, \Delta \mathbf{W}_N \}$ with each $\Delta \mathbf{W}_i \in \mathbb{R}^{d \times k}$, suppose $r$ satisfies $r \ge \max ( \dim ( \bigcup_{i=1}^N \mathcal{C}(\Delta \mathbf{W}_i) ), \dim ( \bigcup_{i=1}^N \mathcal{R}(\Delta \mathbf{W}_i) ) )$, 
where $\mathcal{C}(\cdot)$ and $\mathcal{R}(\cdot)$ denote the column and row spaces, respectively, $\bigcup$ denotes the union operation, and $\dim(\cdot)$ denotes the dimension of a linear subspace. Then, there exist shared matrices $\mathbf{B} \in \mathbb{R}^{d \times r}$ and $\mathbf{A} \in \mathbb{R}^{r \times k}$, and a sequence of core matrices $\{ \mathbf{C}_1, \dots, \mathbf{C}_N \}$ with $\mathbf{C}_i \in \mathbb{R}^{r \times r}$, such that
\begin{equation}
    \Delta \mathbf{W}_i = \mathbf{B} \mathbf{C}_i \mathbf{A}, \quad \forall i \in \{1, \dots, N\}.
\end{equation}
\end{theorem}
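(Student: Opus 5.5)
The construction is direct linear algebra. I read the ``union'' in the statement as the span, or sum, of subspaces: $\mathcal{U} = \sum_{i=1}^N \mathcal{C}(\Delta \mathbf{W}_i) \subseteq \mathbb{R}^d$ and $\mathcal{V} = \sum_{i=1}^N \mathcal{R}(\Delta \mathbf{W}_i) \subseteq \mathbb{R}^k$. A plain set union of subspaces is not a subspace, so this is the natural reading. Write $p = \dim \mathcal{U}$ and $q = \dim \mathcal{V}$; the hypothesis says $r \ge \max(p,q)$.

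First, choose orthonormal bases. Let $\mathbf{U} \in \mathbb{R}^{d \times p}$ have orthonormal columns spanning $\mathcal{U}$, and let $\mathbf{V} \in \mathbb{R}^{k \times q}$ have orthonormal columns spanning $\mathcal{V}$. Define the shared matrices by zero-padding:
\begin{equation*}
\mathbf{B} = [\,\mathbf{U} \;\; \mathbf{0}_{d \times (r-p)}\,], \qquad \mathbf{A} = [\,\mathbf{V} \;\; \mathbf{0}_{k \times (r-q)}\,]^\top .
\end{equation*}
If one prefers $\mathbf{B}$ and $\mathbf{A}$ to have full rank $r$ whenever $r \le \min(d,k)$, the padding columns can be filled with further orthonormal vectors. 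The core matrices can ignore these extra directions, so nothing below changes.

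Second, define the cores. Each column of $\Delta \mathbf{W}_i$ lies in $\mathcal{U}$, so $\mathbf{U}\mathbf{U}^\top \Delta \mathbf{W}_i = \Delta \mathbf{W}_i$. Each row lies in $\mathcal{V}$, so $\Delta \mathbf{W}_i \mathbf{V}\mathbf{V}^\top = \Delta \mathbf{W}_i$. Together these give $\Delta \mathbf{W}_i = \mathbf{U} (\mathbf{U}^\top \Delta \mathbf{W}_i \mathbf{V}) \mathbf{V}^\top$. Set $\mathbf{M}_i = \mathbf{U}^\top \Delta \mathbf{W}_i \mathbf{V} \in \mathbb{R}^{p \times q}$, and let $\mathbf{C}_i \in \mathbb{R}^{r \times r}$ be $\mathbf{M}_i$ placed in the top-left block, with zeros elsewhere. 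A block multiplication then gives $\mathbf{B}\mathbf{C}_i\mathbf{A} = \mathbf{U}\mathbf{M}_i\mathbf{V}^\top = \Delta \mathbf{W}_i$ for every $i$.

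There is no real obstacle. The only points needing care are the interpretation of ``union'' as span, and verifying the two projection identities. Those identities follow because $\mathbf{U}\mathbf{U}^\top$ is the orthogonal projector onto $\mathcal{U}$, which fixes every column of $\Delta \mathbf{W}_i$, and likewise $\mathbf{V}\mathbf{V}^\top$ fixes every row. If the paper intends a literal union, the same proof applies after replacing it by its span, since any basis must span that span anyway.
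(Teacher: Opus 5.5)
Your proposal is correct and uses the same construction as the paper: take orthonormal bases $\mathbf{B}$ and $\mathbf{A}$ for the combined column and row spaces, set the cores to $\mathbf{C}_i = \mathbf{B}^\top \Delta \mathbf{W}_i \mathbf{A}^\top$, and recover $\Delta \mathbf{W}_i$ from the two projection identities. Two of your refinements fix points the paper's proof leaves implicit: zero-padding (or completing with extra orthonormal vectors) when $r$ exceeds $\dim\mathcal{U}$ or $\dim\mathcal{V}$, since the paper asserts that an $r$-column $\mathbf{B}$ is an orthonormal basis of a possibly lower-dimensional subspace, and reading ``union'' as the span.
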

Although Theorem \ref{thm:equivalence} theoretically requires the rank $r$ to be no less than the dimension of the union of the target subspaces, we posit that the effective dimension of this union is relatively low. This is because the weight updates across different noise levels are intrinsically correlated rather than orthogonal, as they address the same generation task. Consequently, there is a significant overlap between their row and column spaces.

\edit{Beyond the shared subspace argument above, we further provide a theoretical intuition for why noise-level-specific parameterization is particularly essential for dLLMs compared to AR models. Let $\mathcal{U} = \{1,\dots,L\}$ denote all token positions. At noise level $\lambda$, a subset $\mathcal{S} \subseteq \mathcal{U}$ of size $m=\lfloor L(1-\lambda)\rfloor$ serves as the revealed positions, and masked positions $\mathcal{M} = \mathcal{U} \setminus \mathcal{S}$ are to be predicted. For AR models, $\mathcal{S}$ is always the prefix $\{1,\dots,m\}$, yielding a unique input configuration per noise level. For dLLMs, however, $\mathcal{S}$ can be any of $\binom{L}{m}$ subsets, making the number of possible input configurations exponentially larger. This combinatorial explosion means dLLMs face a far more diverse input distribution at each noise level than AR models, making noise-level-specific parameterization essential rather than merely beneficial.}

\subsection{Hypernetwork-based Noise Integration}
\label{subsec:hypernetwork}

As illustrated in Figure~\ref{fig:nara_framework}, the architecture of NaRA relies on a globally shared hypernetwork~\citep{ha2016hypernetworks} to learn the mapping from the continuous noise level $\lambda$ to the update matrices. To mitigate the spectral bias inherent in low-dimensional coordinates~\citep{rahaman2019spectral}, we first map the noise level $\lambda$ into a $d$-dimensional space using Gaussian Fourier embeddings~\citep{tancik2020fourier}. Specifically, let $\mathbf{k} \in \mathbb{R}^{d/2}$ be a fixed vector sampled from a Gaussian distribution $\mathcal{N}(0, \sigma^2)$. The embedding $\mathbf{e}_\lambda$ is defined as
\begin{equation}
    \mathbf{e}_\lambda = \cos(2\pi \mathbf{k} \lambda) \oplus \sin(2\pi \mathbf{k} \lambda),
\end{equation}
where $\oplus$ denotes vector concatenation. 
This embedding serves as the input to a lightweight hypernetwork denoted by $\mathcal{F}_\phi$, parameterized by $\phi$, which is implemented as a Multi-Layer Perceptron (MLP).
To ensure the training stability, we formulate the core matrix $\mathbf{C}(\lambda)$ as additive deviation from the identity matrix: 
\begin{equation}
\mathbf{C}(\lambda) = \mathbf{I}_r + \eta \cdot \mathcal{F}_\phi(\mathbf{e}_\lambda),
\label{eq:hypernetwork_computation}
\end{equation}
where $\mathbf{I}_r \in \mathbb{R}^{r \times r}$ denotes the $r\times r$ identity matrix, $\mathcal{F}_\phi$ is to project the embedding $\mathbf{e}_\lambda$ into the latent matrix space $\mathbb{R}^{r \times r}$, and $\eta$ is a scalar hyperparameter that scales the magnitude of the additive deviation.

A key feature of NaRA is the global sharing of the hypernetwork across all layers. Specifically, considering a specific target module $m$ at layer index $l$ within the dLLM, which was omitted for brevity in Eq.~\eqref{eq:nara_formulation}, the corresponding weight update $\Delta \mathbf{W}_{l,m}(\lambda)$ is derived as
\begin{equation}
    \Delta \mathbf{W}_{l,m}(\lambda) = \mathbf{B}_{l,m}  \mathbf{C}(\lambda)  \mathbf{A}_{l,m}.
    \label{eq:nara_formulation_layer_specific}
\end{equation}
Note that while $\mathbf{A}_{l,m}$ and $\mathbf{B}_{l,m}$ vary by layer and module to capture local adaptation patterns,the hypernetwork $\mathcal{F}_\phi$ (and thus $\mathbf{C}(\lambda)$) is shared across the entire model.
This sharing strategy offers two critical advantages. First, it ensures efficiency in both the number of parameters and inference latency. By computing $\mathbf{C}(\lambda)$ only once per timestep and broadcasting it to all modules, we ensure that the computational cost remains comparable to standard LoRA. Second, this sharing strategy serves as a regularizer that stabilizes the training and prevents overfitting. 
Specifically, in contrast to allocating unique hypernetworks to separate layers, which would drastically inflate the optimization landscape, the shared design enforces a unified and robust noise-aware adaptation mechanism, as validated in Section~\ref{sec:ablation_sharing}. \edit{Furthermore, the noise-conditioning mechanism introduced here is not restricted to NaRA's low-rank structure. It can be seamlessly applied to other low-rank PEFT methods. As a demonstration, we integrate it with DoRA~\citep{liu2024dora}, with results reported in Appendix~\ref{app:dora_nara}.}

\subsection{Gradient Analysis and Initialization}
\label{sec:gradient_analysis}
To guarantee optimization stability, we compare the gradient dynamics of NaRA with standard LoRA. For standard LoRA, derived from Eq.~\eqref{eq:lora_def}, the gradients with respect to $\mathbf{A}$ and $\mathbf{B}$ are calculated as
\begin{equation}
    \nabla_\mathbf{A} \mathcal{L} = \mathbf{B}^\top \nabla_{\Delta \mathbf{W}} \mathcal{L}, \quad 
    \nabla_\mathbf{B} \mathcal{L} = \nabla_{\Delta \mathbf{W}} \mathcal{L} \mathbf{A}^\top,
    \label{eq:gradient_lora}
\end{equation}
where $\nabla$ denotes the gradient operator, $\mathcal{L}$ denotes the loss function defined in Eq.~\eqref{eq:dllm_loss} and the superscript $^\top$ represents the matrix transpose operation.
In contrast, NaRA incorporates $\mathbf{C}(\lambda)$ as defined in Eq.~\eqref{eq:nara_formulation}, resulting in the following gradients as
\begin{equation}
\begin{split}
    \nabla_\mathbf{A} \mathcal{L} &= \mathbf{C}(\lambda)^\top \mathbf{B}^\top \nabla_{\Delta \mathbf{W}} \mathcal{L}, \\
    \nabla_\mathbf{B} \mathcal{L} &= \nabla_{\Delta \mathbf{W}} \mathcal{L} \mathbf{A}^\top \mathbf{C}(\lambda)^\top.
    \label{eq:gradient_nara}
\end{split}
\end{equation}

By comparing Eq.~\eqref{eq:gradient_lora} and Eq.~\eqref{eq:gradient_nara}, it is evident that if $\mathbf{C}(\lambda)$ acts as an identity matrix, the gradient of NaRA becomes mathematically identical to that of standard LoRA. 
Motivated by this, we aim to initialize the hypernetwork such that $\mathbf{C}(\lambda) = \mathbf{I}$ at the start of training. According to Eq.~\eqref{eq:hypernetwork_computation}, this condition is satisfied when the output of the hypernetwork $\mathcal{F}_\phi(\mathbf{e}_\lambda)$ is zero for any $\lambda$. To achieve this, we initialize the weights and biases of the final layer of the hypernetwork to zero. For other layers, we employ Kaiming uniform initialization~\cite{he2015delving}.
Furthermore, consistent with the standard LoRA, the update matrix $\mathbf{A}$ follows the Kaiming initialization, whereas $\mathbf{B}$ is initialized to zero.

\input{tabs/ex_commonsense_benchmark}
\subsection{Training and Inference}
\label{sec:training_inference}


\textbf{Training.} NaRA seamlessly integrates with the standard supervised fine-tuning paradigm by introducing a lightweight operation before the forward pass, which involves dynamically generating and broadcasting the modulation matrix $\mathbf{C}(\lambda)$. This design preserves the original optimization pipeline, rendering NaRA a plug-and-play module. The complete training procedure is detailed in Appendix~\ref{app:pseudocode}.

\textbf{Inference.} We follow the semi-autoregressive (semi-AR) sampling strategy from LLaDA~\citep{nie2025large}, where the sequence is generated iteratively in blocks. To further reduce the latency, we introduce a block-wise early termination mechanism. Specifically, during the decoding, if an end-of-sentence (EOS) token is detected within a block, the model generates at most one additional block and then stops, padding the remaining positions with EOS tokens. This strategy could eliminate redundant computations for non-informative tokens, significantly accelerating the inference while maintaining the generation accuracy. 
A quantitative analysis of this speedup is provided in Appendix~\ref{app:early_termination}. \edit{Note that this block-wise early termination is a general inference optimization applicable to any adapter-based method including standard LoRA.}

%% file: tabs/ex_commonsense_benchmark.tex
\begin{table*}[t]
\centering
\caption{Results on commonsense reasoning benchmarks. We report the zero-shot accuracy for all tasks. For fair comparison, all adapter-based methods share the same rank $r=32$. The ``Param'' column indicates the percentage of trainable parameters relative to the base model. For each column, the best result is highlighted in \textbf{bold}, and the second-best result is \underline{underlined}.}
\vspace{-5pt}
\label{tab:commonsense170k_compact}
\begin{small}
\begin{sc}
\begin{tabular}{l c ccccccccc}
\toprule
Method & Param(\%) & Arc-C & Arc-E & BoolQ & Hella & OBQA & PIQA & SIQA & Wino & \textbf{Avg} \\
\midrule
\multicolumn{11}{c}{LLaDA-8B-Instruct} \\
\midrule
Zero-Shot     & /     & 66.55 & 71.00 & 55.60 & 74.41 & 53.60 & 82.75 & 41.56 & 0.08 & 55.69 \\
Prompt Tuning & 0.001 & 71.16 & 78.58 & 54.31 & 72.44 & 62.20 & 82.48 & 53.12 & 0.24 & 59.32 \\
P-Tuning      & 0.417 & 83.70 & 94.53 & 66.18 & 89.21 & \underline{85.80} & 84.11 & 77.38 & 78.69 & 82.45 \\
LoRA          & 0.417 & 75.74 & 87.84 & \textbf{68.02} & \underline{90.52} & 77.87 & \underline{85.40} & 75.38 & \underline{80.69} & 80.18 \\
HiRA          & 0.417 & \textbf{86.12} & \underline{95.03} & 66.14 & 87.09 & 85.73 & \textbf{85.69} & \underline{78.22} & 77.85 & \underline{82.73} \\
NaRA (Ours)   & 0.425 & \underline{85.92} & \textbf{95.65} & \underline{67.31} & \textbf{90.89} & \textbf{86.87} & 85.18 & \textbf{79.82} & \textbf{81.06} & \textbf{84.09} \\
\midrule
\multicolumn{11}{c}{LLaDA-8B-Base} \\
\midrule
Zero-Shot     & /       & 55.72             & 65.45             & 59.57             & 55.05             & 42.40             & 66.59             & 33.16             & 46.72             & 53.08 \\
Prompt Tuning & 0.001  & 56.06             & 66.08             & 50.52             & 40.53             & 47.00             & 57.40             & 37.87             & 52.17             & 50.95 \\
P-Tuning      & 0.417  & 83.62             & 92.97             & 57.61             & 88.64             & \underline{85.00}             & 85.20             & 75.95             & \textbf{80.03}             & 81.13 \\
LoRA          & 0.417  & 76.99             & 87.30             & \textbf{67.17}    & 75.15             & 73.33             & \textbf{85.53}            & 68.76             & 74.59             & 76.10 \\
HiRA          & 0.417  & \underline{85.58} & \underline{95.02} & 63.43             & \underline{89.70}             & 84.33             & 84.48             & \underline{77.58}             & 77.01             & \underline{82.14} \\
NaRA (Ours)   & 0.425  & \textbf{86.29}    & \textbf{95.50}    & \underline{66.35}             & \textbf{90.98} & \textbf{87.13}    & \underline{85.42} & \textbf{79.38}    & \underline{79.58} & \textbf{83.83} \\
\bottomrule
\end{tabular}
\end{sc}
\end{small}
\end{table*}

%% file: sections/06experiments.tex
\section{Experiments}
\label{sec:experiments}
\input{tabs/ex_math_benchmark}

In this section, we empirically evaluate the proposed NaRA.

\subsection{Experimental Setups}
\textbf{Models and Datasets.}
We evaluate two variants of LLaDA~\citep{nie2025large}: LLaDA-8B-Instruct, instruction-tuned on 4.5 million instruction–response pairs, and LLaDA-8B-Base, a pretraining-only model without instruction tuning that serves as a more challenging backbone for assessing fine-tuning effectiveness. We conduct evaluations on three domains: commonsense reasoning, mathematical reasoning, and code generation.

For commonsense reasoning, we fine-tune on Commonsense170k~\citep{hu2023llm}, comprising 170,420 query-answer pairs. We evaluate on eight benchmarks with diverse capabilities, including BoolQ~\citep{boolq}, PIQA~\citep{piqa}, SIQA~\citep{siqa}, HellaSwag~\citep{zellers2019hellaswag}, Winogrande~\citep{winogrande}, OpenBookQA~\citep{mihaylov-etal-2018-suit}, ARC-Challenge, and ARC-Easy~\citep{arc}.
For mathematical reasoning, we fine-tune on Math14k \citep{hu2023llm}, a composite dataset derived from GSM8K~\citep{gsm8k} and AQuA~\citep{aqua} enriched with GPT-generated rationales. We evaluate on four tasks, namely GSM8K, AddSub~\citep{addsub}, AQuA~\citep{aqua}, and MultiArith~\citep{mutli_arith}.
For code generation, we train on the filtered CodeFeedback dataset \citep{meng2024pissa,zheng-etal-2024-opencodeinterpreter} and evaluate on the HumanEval~\citep{chen2021evaluating} and MBPP~\citep{austin2021program} benchmarks. Detailed descriptions and statistics of those datasets are provided in Appendix~\ref{app:datasets}.

\textbf{Baselines and Evaluation.}
\label{subsec:setup}
We compare NaRA against Prompt Tuning~\citep{lester2021power}, P-Tuning~\citep{liu2022p}, LoRA~\citep{hu2022lora}, and HiRA~\citep{huang2025hira}. For both commonsense and mathematical reasoning tasks, we use the accuracy as the evaluation metric. For code generation, we specifically utilize the lm-evaluation-harness framework~\citep{eval-harness}  to assess performance using the pass@1 metric.

\noindent\textbf{Implementation Details.} 
We fine-tune models for $10$ epochs on mathematical reasoning datasets, while training for $1$ epoch on the larger code and commonsense datasets. 
A validation set of $128$ samples is reserved from each dataset to monitor the validation loss every $64$ update steps, allowing us to select the best-performing checkpoint and prevent overfitting.
For statistical reliability, we conduct three independent runs for NaRA and report the average performance. To ensure fair comparison, we control the number of trainable parameters to be comparable to that of baselines. Unless otherwise specified, the scaling factor $\eta$ in Eq.~\eqref{eq:hypernetwork_computation} is set to $0.1$. Additional training and inference details are provided in Appendix~\ref{app:training_details} and Appendix~\ref{app:inference}. 


\subsection{Results on Commonsense Reasoning}
\label{subsec: results on commonsense reasoning}

Table \ref{tab:commonsense170k_compact} summarizes the accuracy across eight commonsense reasoning benchmarks. On the LLaDA-Instruct backbone, NaRA achieves a state-of-the-art average accuracy of $84.09\%$, significantly outperforming the standard LoRA baseline ($80.18\%$) and surpassing advanced variants like HiRA ($82.73\%$). 

We further extend the evaluation to the LLaDA-Base backbone to assess NaRA’s ability. 
While LoRA drops to an average of $76.10\%$, NaRA maintains high performance with an average of $83.83\%$, and outperforms HiRA ($82.14\%$). In this setting, NaRA ranks the first in $5$ out of $8$ tasks and second in the remaining $3$ tasks, showcasing consistent adaptability. Notably, compared to standard LoRA, NaRA introduces less than $0.01\%$ additional trainable parameters relative to the base model, confirming the efficiency of our noise-aware adaptation strategy.
\subsection{Results on Mathematical Reasoning}
\label{subsec:results_on_math_reasoning}

Table \ref{tab:math_reasoning} summarizes the performance on mathematical reasoning benchmarks. On the LLaDA-Instruct backbone, NaRA achieves the highest average accuracy of $80.89\%$, surpassing all baseline methods. Notably, NaRA demonstrates superior reasoning capabilities. Crucially, it secures the top position on the challenging AQuA dataset ($57.27\%$), significantly outperforming the second-best method ($54.45\%$), and also leads on GSM8K and MultiArith.

This advantage extends to the LLaDA-Base model, where NaRA maintains the leading position with a top average accuracy of $77.02\%$. In this setting, NaRA again ranks the first in $3$ out of $4$ datasets, including GSM8K, MultiArith, and AQuA. These results validate that our noise-aware adaptation effectively enhances mathematical reasoning capabilities compared to standard PEFT methods.

\edit{To rule out the possibility that NaRA's improvement is attributable to its marginally larger parameter budget, we additionally evaluate LoRA at rank $r=40$, which exceeds NaRA's budget at $r=32$. As shown in Appendix~\ref{app:higher_rank_lora}, LoRA (r=40) still underperforms NaRA (r=32), confirming that the gains stem from noise-aware adaptation rather than parameter scale.}

\input{tabs/ex_code_generation}
\begin{figure*}[!t]
    \centering
    \includegraphics[width=0.9\textwidth]{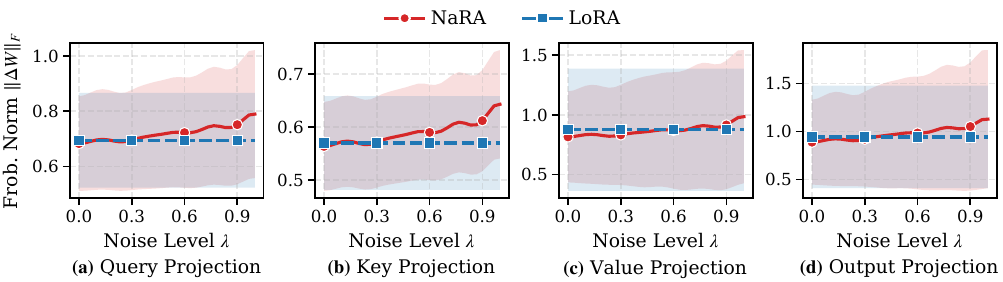}
    \caption{Variation of $\|\Delta W(\lambda)\|_F$ on code generation task. Solid lines and shading show the mean and standard deviation across layers.}
    \vspace{-8pt}
    \label{fig:nara_norm}
\end{figure*}
\subsection{Results on Code Generation}
We further evaluate the code generation capabilities on the MBPP and HumanEval benchmarks using the LLaDA-Instruct backbone. As shown in Table \ref{tab:code_generation}, NaRA achieves the best performance with an average pass@1 accuracy of $39.32\%$. Notably, on the HumanEval dataset, NaRA is the only method that exceeds $40\%$, outperforming LoRA ($39.94\%$) and other methods. Similarly, on MBPP, NaRA also ranks the first with $38.60\%$ accuracy.
These results show that our noise-aware dynamic adaptation effectively captures the structural dependencies required for coding tasks, consistently outperforming existing PEFT baselines.


\subsection{Impact of Noise Level on Update Magnitude}

We analyze the learned update matrices $\Delta W(\lambda)$ defined in Eq.~\eqref{eq:nara_formulation} to characterize NaRA's adaptation behavior across noise levels. Since $\Delta W(\lambda)$ is directly added to the pre-trained model parameters, the Frobenius norm of $\Delta W(\lambda)$, denoted $\|\Delta W(\lambda)\|_F$, provides a principled measure of the update magnitude induced by NaRA. Consequently, correlating $\|\Delta W(\lambda)\|_F$ with the noise level $\lambda$ offers a direct lens into how $\lambda$ influences the update matrices.
Figure~\ref{fig:nara_norm} presents the update magnitudes across attention modules for the code generation task, revealing a consistent positive correlation with $\lambda$. Under low-noise conditions, NaRA applies only minor parameter updates through the update matrices $\Delta W(\lambda)$, thereby preserving pre-trained knowledge and reducing the risk of overfitting. As the noise level increases, the magnitude of these updates grows accordingly, providing the model with sufficient capacity to recover semantic structure from heavily masked inputs. Overall, these indicate that NaRA learns an effective noise-aware adaptation strategy.

%% file: tabs/ex_math_benchmark.tex
\begin{table*}[t]
\centering
\caption{
Experimental results on mathematical reasoning tasks. 
We report the zero-shot accuracy for all tasks. For fair comparison, all adapter-based methods share the same rank $r=32$.
The ``Param'' column indicates the percentage of trainable parameters relative to the base model.
\textbf{Bold} indicates the best result, and \underline{underline} indicates the second best.
}
\vspace{-5pt}
\label{tab:math_reasoning}
\begin{small}
\begin{sc}
\setlength{\tabcolsep}{13.5pt} 
\begin{tabular}{l c cccc c}
\toprule
Method & Param(\%) & GSM8K & AddSub & AQuA & MultiArith & \textbf{Avg} \\
\midrule
\multicolumn{7}{c}{LLaDA-8B-Instruct} \\
\midrule
Zero-Shot     & /      & 75.68 & 87.12 & 50.20 & 97.50 & 77.63 \\
Prompt Tuning & 0.001 & 65.96 & 57.22 & 50.39 & 45.83 & 54.85 \\
P-Tuning      & 0.417 & 77.41 & \textbf{89.87} & 54.33 & 96.94 & 79.64 \\
LoRA          & 0.417 & \underline{78.96} & 88.38 & \underline{54.45} & \underline{98.50} & \underline{80.07} \\
HiRA          & 0.417 & 75.01 & \underline{89.62} & 53.94 & 93.72 & 78.07 \\
NaRA (Ours)   & 0.425 & \textbf{79.03} & 88.53 & \textbf{57.27} & \textbf{98.72} & \textbf{80.89} \\
\midrule
\multicolumn{7}{c}{LLaDA-8B-Base} \\
\midrule
Zero-Shot     & /      & 37.76 & 32.91 & 35.43 & 37.33 & 35.86 \\
Prompt Tuning & 0.001 & 20.92 & 15.70 & 30.31 & 26.00 & 23.23 \\
P-Tuning      & 0.417 & 73.09 & \textbf{87.09} & \textbf{49.74} & 93.83 & 75.94 \\
LoRA          & 0.417 & \underline{75.28} & \underline{84.73} & \underline{49.61} & \underline{95.56} & \underline{76.29} \\
HiRA          & 0.417 & 68.61 & 82.28 & 40.55 & 87.17 & 69.65 \\
NaRA (Ours)   & 0.425 & \textbf{77.69} & 82.95 & \textbf{49.74} & \textbf{97.72} & \textbf{77.02} \\
\bottomrule
\end{tabular}
\end{sc}
\end{small}
\end{table*}

%% file: tabs/ex_code_generation.tex
\begin{table}[!tbp]
\centering
\caption{
Pass@1 accuracy on code generation tasks using LLaDA-8B-Instruct. 
All adapter-based methods use rank $r=32$. 
\textbf{Bold} indicates the best result, and \underline{underline} indicates the second best.
}
\vspace{-5pt}
\label{tab:code_generation}
\begin{small}
\begin{sc}
\setlength{\tabcolsep}{7.5pt} 
\begin{tabular}{l ccc}
\toprule
Method & MBPP & HumanEval & \textbf{Avg} \\
\midrule
Zero-Shot     & 36.40 & 36.59 & 36.49 \\
Prompt Tuning & 37.00 & 18.29 & 27.65 \\
P-Tuning      & 34.27 & 36.79 & 35.53 \\
LoRA          & \underline{37.20} & \underline{39.43} & \underline{38.32} \\
HiRA          & 27.00 & 39.02 & 33.01 \\
NaRA (Ours)   & \textbf{38.60} & \textbf{40.04} & \textbf{39.32} \\
\bottomrule
\end{tabular}
\end{sc}
\end{small}
\end{table}

%% file: sections/07ablation_studies.tex
\section{Ablation Study}
\label{sec:ablation studies}

\subsection{Impact of the Scaling Factor $\eta$}
We investigate the sensitivity of NaRA to the scaling factor $\eta$ introduced in Eq.~\eqref{eq:hypernetwork_computation}. Beyond regulating the magnitude of the core matrix $\mathbf{C}(\lambda)$, $\eta$ serves as a linear scaling coefficient for the gradient flow backpropagated to the hypernetwork $\mathcal{F}_\phi$. Specifically, based on Eq.~\eqref{eq:gradient_nara}, the gradient of the loss $\mathcal{L}$ with respect to the parameters $\phi$ is formulated as
\begin{equation}
    \nabla_\phi \mathcal{L} = \eta \cdot (\mathbf{B}^\top \nabla_{\Delta \mathbf{W}} \mathcal{L} \mathbf{A}^\top) \frac{\partial \mathcal{F}_\phi}{\partial \phi}, 
    \label{eq:gradient_eta}
\end{equation}
indicating that $\eta$ effectively controls the gradient flow. 

\input{tabs/ex_ablation_on_eta}
Table \ref{tab:ablation_scaling} presents the results on mathematical reasoning tasks. It's worth noting that all three NaRA configurations achieve higher average accuracy than the standard LoRA baseline reported in Table \ref{tab:math_reasoning}. Among the evaluated candidates, a scaling factor of $0.1$ yields the superior average performance.
The visualizations of the Frobenius norm presented in Appendix \ref{sec:app_vis} also help clarify the underlying mechanisms. The factor \(\eta\) directly influences the magnitude of the update matrices $\Delta W(\lambda)$ after training. Setting $\eta$ to $1.0$ results in excessive fluctuations in $\Delta W(\lambda)$, leading to high variance. This volatility can destabilize the optimization trajectory. On the other hand, lowering $\eta$ to $0.01$ dampens the noise-varying signal, keeping $\Delta W(\lambda)$ nearly constant and making the method behave like standard LoRA. We therefore use $\eta = 0.1$ by default to balance flexibility and stability.

\input{tabs/ablation_ablation_on_embedding}
\subsection{Effectiveness of Gaussian Fourier Embedding}
\label{sec:ablation_embedding}

We evaluate three input embedding strategies for the hypernetwork: default Fourier embedding, a learnable MLP embedding, and a raw input baseline denoted as Scalar.
As shown in Table~\ref{tab:ablation_embedding}, the Fourier embedding achieves superior overall performance, securing the highest average accuracy of 80.89\% compared to the MLP (80.39\%) and Scalar (80.26\%) baselines. 
While the Scalar remains competitive on simpler arithmetic tasks, its performance degrades on complex reasoning benchmarks like AQuA, where the Fourier embedding maintains a distinct lead. 
Beyond average accuracy, our analysis reveals significant differences in training stability. 
The Fourier embedding demonstrates remarkable robustness, maintaining the lowest standard deviation across all tasks with an average of just $\pm 0.34$. 
In contrast, the baselines exhibit high volatility. Specifically, the MLP suffers from extreme instability on AQuA ($\pm 2.45$), while the Scalar fluctuates notably on GSM8K ($\pm 1.17$). 
These findings suggest that neither Scalar nor MLP can robustly distinguish noise levels, whereas high-frequency Fourier embedding effectively regularizes the hypernetwork to consistently capture the nuances across noise levels.
\input{tabs/ex_multi_lora}
\subsection{Impact of Hypernetwork Sharing Strategy}
\label{sec:ablation_sharing}
We investigate the optimal granularity for the hypernetwork configuration by comparing a global sharing strategy against module-specific and grouped alternatives. The results presented in Table \ref{tab:ablation_sharing} demonstrate that the global sharing strategy consistently yields the highest average accuracy of 80.89\% across the benchmarks. In contrast, the module-specific configurations all exhibit a performance drop, with accuracies ranging from 79.83\% to 80.13\%, despite possessing a greater number of trainable parameters. This empirical evidence suggests that the dependency on the noise level constitutes a global property of the diffusion process rather than a module-specific characteristic. Consequently, a single shared hypernetwork effectively captures this dependency and acts as a regularizer to prevent overfitting associated with increased architectural complexity.

\input{tabs/ablation_ablation_sharing}

\subsection{Impact of Rank $r$}
\label{sec:impact_of_rank}

To investigate the sensitivity of NaRA to the low-rank dimension, we evaluate its performance across varying ranks $r \in \{8, 16, 32, 64\}$. 
To ensure training stability across different ranks, we scale the hidden and embedding dimensions of the hypernetwork accordingly, keeping the underlying structure identical across all configurations. Detailed hypernetwork configurations and their corresponding parameter counts for each rank are provided in Appendix \ref{appendix:hyper_params_for_ranks}.

\begin{figure}[!t]
\centering
\includegraphics[width=0.8\linewidth]{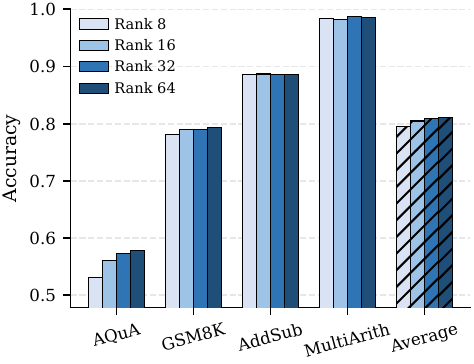}
\vspace{-5pt}
\caption{Performance of NaRA on mathematical reasoning tasks across different ranks, showing an increasing trend in average performance.}
\vspace{-12pt}
\label{fig:rank_comparison}
\end{figure}

As illustrated in Figure~\ref{fig:rank_comparison}, the average performance of NaRA scales positively with the rank $r$, a trend driven largely by significant improvements on the AQuA and GSM8K datasets. Notably, NaRA demonstrates remarkable parameter efficiency. Even at a reduced rank of $r=16$, our method achieves an average performance of $0.8047$, surpassing the standard LoRA baseline at $r=32$ which scores $0.8007$. This indicates that the noise-aware dynamic modulation of the update matrices allows for more effective fine-tuning in dLLMs than simply increasing the rank of static update matrices.

\subsection{\edit{Comparison with Multi-LoRA}}
\label{sec:comparison_with_multi_lora}
A natural alternative to NaRA is to train separate LoRA adapters for discrete noise-level intervals. We implement this Multi-LoRA baseline that partitions the noise range into 4 intervals with one dedicated adapter per interval. As shown in Table~\ref{tab:multi_lora}, Multi-LoRA performs worse than both LoRA and NaRA despite using $\sim$4$\times$ more parameters. This supports our design motivation that discrete adapters prevent cross-noise-level information sharing, whereas NaRA's continuous shared subspace enables effective noise-aware adaptation with parameter efficiency.

\subsection{\edit{Generalization to Image Diffusion}}
\input{tabs/ablation_image_diffusion}
To demonstrate the broader applicability of NaRA, we evaluate it on the image diffusion domain using SDXL~\citep{podell2024sdxl} under the DreamBooth subject-driven generation setup~\citep{ruiz2023dreambooth}. Following the official HuggingFace \texttt{Diffusers} example, we use the \texttt{google/dreambooth} dataset, specifically the \texttt{dog6} subset comprising 5 images, and train for 1000 steps. During training, NaRA takes the normalized diffusion timestep $\lambda = t/T \in [0,1]$ as the noise-level input to the hypernetwork. All other settings are kept identical to standard LoRA.

For evaluation, we generate images using DDIM with the prompt ``a photo of sks dog'', evaluated by DINO~\citep{oquab2024dinov}, CLIP-I, and FID.

From Table~\ref{tab:image_diffusion}, we can see that NaRA outperforms LoRA on all three metrics, demonstrating that noise-aware PEFT generalizes to the image diffusion domain.

%% file: tabs/ex_ablation_on_eta.tex
\begin{table}[!t]
\centering
\caption{
Ablation on the scaling factor $\eta$ evaluated on mathematical reasoning tasks.
We report the zero-shot accuracy for all tasks. 
}
\vspace{-5pt}
\label{tab:ablation_scaling}
\begin{small}
\begin{sc}
\setlength{\tabcolsep}{3pt} 
\begin{tabular}{l cccc c}
\toprule
$\eta$ & GSM8K & AddSub & AQuA & MultiArith & \textbf{Avg} \\
\midrule
1.0   & \textbf{79.13} & 87.27 & \underline{56.49} & \underline{98.67} & \underline{80.39} \\
0.1   & \underline{79.03} & \underline{88.53} & \textbf{57.27} & \textbf{98.72} & \textbf{80.89} \\
0.01  & 78.85 & \textbf{89.63} & 54.46 & 98.11 & 80.26 \\
\bottomrule
\end{tabular}
\vspace{-8pt}
\end{sc}
\end{small}
\end{table}

%% file: tabs/ablation_ablation_on_embedding.tex
\begin{table*}[t]
    \centering
    \caption{
        Ablation study on the noise-level embedding strategy (Fourier vs. MLP vs. Scalar) on mathematical reasoning tasks.
    }
    \vspace{-5pt}
    \label{tab:ablation_embedding}
    \begin{small}
    \begin{sc}
    \setlength{\tabcolsep}{10pt}
    \begin{tabular}{l ccccc}
    \toprule
    Method & GSM8K & AddSub & AQuA & MultiArith & \textbf{Avg} \\
    \midrule
    Scalar & 78.77 $\pm$ 1.17 & \textbf{88.69} $\pm$ 0.89 & 54.72 $\pm$ 0.39 & \textbf{98.83} $\pm$ 0.29 & 80.26 $\pm$ 0.68 \\
    MLP & \textbf{79.59} $\pm$ 0.80 & 87.69 $\pm$ 1.16 & \underline{55.57} $\pm$ 2.45 & \underline{98.72} $\pm$ 0.63 & \underline{80.39} $\pm$ 1.26 \\
    Fourier (Ours) & \underline{79.03} $\pm$ 0.24 & \underline{88.53} $\pm$ 0.52 & \textbf{57.27} $\pm$ 0.20 & \underline{98.72} $\pm$ 0.42 & \textbf{80.89} $\pm$ 0.34 \\
    \bottomrule
    \end{tabular}
    \end{sc}
    \end{small}
\end{table*}

%% file: tabs/ex_multi_lora.tex
\begin{table*}[t]
    \centering
    \caption{
        Comparison with Multi-LoRA on mathematical reasoning.
        Mean and standard deviation are reported over three seeds.
    }
    \vspace{-5pt}
    \label{tab:multi_lora}
    \begin{small}
    \begin{sc}
    \setlength{\tabcolsep}{10pt}
    \begin{tabular}{l c cccc c}
    \toprule
    Method & Param(\%) & GSM8K & AddSub & AQuA & MultiArith & \textbf{Avg} \\
    \midrule
    Multi-LoRA  & 1.68 & 77.18 $\pm$ 1.07 & 87.51 $\pm$ 0.52 & 37.79 $\pm$ 1.38 & 96.83 $\pm$ 0.70 & 74.83 \\
    LoRA        & 0.42 & \underline{78.96} $\pm$ 0.42 & \underline{88.38} $\pm$ 0.87 & \underline{54.45} $\pm$ 1.90 & \underline{98.50} $\pm$ 0.29 & \underline{80.07} \\
    NaRA (Ours) & 0.43 & \textbf{79.03} $\pm$ 0.24 & \textbf{88.53} $\pm$ 0.52 & \textbf{57.27} $\pm$ 0.20 & \textbf{98.72} $\pm$ 0.63 & \textbf{80.89} \\
    \bottomrule
    \end{tabular}
    \end{sc}
    \end{small}
\end{table*}

%% file: tabs/ablation_ablation_sharing.tex
\begin{table}[!t]
    \centering
    \caption{
    Ablation study on hypernetwork sharing strategies.
    \emph{Shared} uses a single global hypernetwork.
    Rows with `/' denote separate hypernetworks (e.g., Q/K/V/O means four independent networks).
    }
    \vspace{-5pt}
    \label{tab:ablation_sharing}
    \begin{small}
    \begin{sc}
    \setlength{\tabcolsep}{3pt} 
    \resizebox{\linewidth}{!}{
    \begin{tabular}{l c cccc c} 
    \toprule
    Strategy & Param(\%) & GSM8K & AddSub & AQuA & MultiArith & \textbf{Avg} \\
    \midrule
    Shared   & 0.425 & \underline{79.03} & \textbf{88.53} & \textbf{57.27} & \underline{98.72} & \textbf{80.89} \\
    \midrule
    Q/K/V/O         & 0.450 & 78.77 & 88.10 & 54.59 & \textbf{99.06} & 80.13 \\
    QV/KO           & 0.434 & \textbf{79.38} & \underline{88.35} & 54.99 & 98.67 & \underline{80.35} \\
    QO/KV           & 0.434 & 78.44 & 87.59 & \underline{55.25} & 98.44 & 79.93 \\
    QK/VO           & 0.434 & 78.59 & 88.27 & 54.07 & 98.39 & 79.83 \\
    \bottomrule
    \end{tabular}
    }
    \vspace{-10pt}
    \end{sc}
    \end{small}
\end{table}

%% file: tabs/ablation_image_diffusion.tex
\begin{table}[h]
\centering
\caption{NaRA on DreamBooth subject-driven generation with SDXL.}
\vspace{-5pt}
\label{tab:image_diffusion}
\begin{small}
\begin{sc}
\setlength{\tabcolsep}{10pt}
\begin{tabular}{l ccc}
\toprule
Method & DINO $\uparrow$ & CLIP-I $\uparrow$ & FID $\downarrow$ \\
\midrule
LoRA          & 0.0458 & 0.6901 & 22.13 \\
NaRA (Ours)   & \textbf{0.0610} & \textbf{0.7365} & \textbf{16.71} \\
\bottomrule
\end{tabular}
\end{sc}
\end{small}
\end{table}

%% file: sections/08conclusion.tex
\section{Conclusion}

In this work, we presented Noise-aware Low-Rank Adaptation (NaRA) to address the structural misalignment between static PEFT methods and the dynamic generation process of dLLMs. We identify that the requisite update matrices vary significantly across different noise levels, rendering noise-agnostic PEFT methods suboptimal for the multi-step denoising trajectory. NaRA resolves this limitation by incorporating a lightweight, globally shared hypernetwork that dynamically modulates the low-rank core matrix. This design enables the model to continuously adapt its behavior in accordance with the varying difficulty of the generation steps. Empirical results confirm that NaRA effectively aligns the adaptation mechanism with the underlying generative dynamics, achieving superior performance compared to static PEFT methods while maintaining negligible parameter overhead. Our findings establish NaRA as an effective PEFT\footnote{***// \edit{Changed to PEFT}} framework tailored to dLLMs.

%% file: sections/appendix.tex
\section{Proof of Theorem \ref{thm:equivalence}}
\label{app:proof_equivalence}

In this section, we provide the constructive proof for Theorem~\ref{thm:equivalence}. The proof proceeds by constructing a shared coordinate system derived from the union of the subspaces spanned by the target updates.

\begin{proof}
Let $\mathcal{U}_{col} = \bigcup_{i=1}^N \mathcal{C}(\Delta \mathbf{W}_i)$ denote the union of the column spaces of all target update matrices. We construct $\mathbf{B} \in \mathbb{R}^{d \times r}$ such that its columns form an orthonormal basis for $\mathcal{U}_{col}$. Since the columns of any specific update $\Delta \mathbf{W}_i$ reside entirely within $\mathcal{U}_{col}$, the projection of $\Delta \mathbf{W}_i$ onto the subspace spanned by $\mathbf{B}$ is lossless, implying $\mathbf{B} \mathbf{B}^\top \Delta \mathbf{W}_i = \Delta \mathbf{W}_i$.

Analogously, let $\mathcal{U}_{row} = \bigcup_{i=1}^N \mathcal{R}(\Delta \mathbf{W}_i)$ be the union of the row spaces. We construct $\mathbf{A} \in \mathbb{R}^{r \times k}$ such that its rows form an orthonormal basis for $\mathcal{U}_{row}$. As the rows of $\Delta \mathbf{W}_i$ lie within $\mathcal{U}_{row}$, the projection onto $\mathbf{A}$ is similarly lossless, satisfying $\Delta \mathbf{W}_i \mathbf{A}^\top \mathbf{A} = \Delta \mathbf{W}_i$.

With these global bases fixed, we define the dynamic core matrix for the $i$-th task as the projection of the update into this shared coordinate system: $\mathbf{C}_i = \mathbf{B}^\top \Delta \mathbf{W}_i \mathbf{A}^\top$. Substituting $\mathbf{C}_i$ back into the NaRA formulation recovers the original update matrix exactly:
\begin{equation}
\begin{aligned}
    \mathbf{B} \mathbf{C}_i \mathbf{A} &= \mathbf{B} (\mathbf{B}^\top \Delta \mathbf{W}_i \mathbf{A}^\top) \mathbf{A} \\
    &= (\mathbf{B} \mathbf{B}^\top) \Delta \mathbf{W}_i (\mathbf{A}^\top \mathbf{A}) \\
    &= \Delta \mathbf{W}_i.
\end{aligned}
\end{equation}
Here, the final equality holds because $\mathbf{B}\mathbf{B}^\top$ and $\mathbf{A}^\top \mathbf{A}$ act as identity operators on the column and row spaces of $\Delta \mathbf{W}_i$, respectively. Thus, the factorization is exact.
\end{proof}

\section{Training Algorithm Details}
\label{app:pseudocode}

We present the training procedure of NaRA in Algorithm~\ref{alg:training_pseudocode}, 
where $\mathcal{U}(a,b)$ denotes the continuous uniform distribution on $[a,b]$, 
and for any sequence vector $\mathbf{v}$, $|\mathbf{v}|$ denotes its length while $\|\mathbf{m}\|_1$ denotes the number of masked tokens in the binary mask vector $\mathbf{m}$. 
The overall pipeline aligns with the standard SFT framework for diffusion dLLMs~\citep{nie2025large}, 
incorporating our dynamic modulation mechanism into the forward pass.

\begin{algorithm}[htbp]
   \caption{NaRA Supervised Fine-Tuning (SFT)}
   \label{alg:training_pseudocode}
\begin{algorithmic}[1]
    \REQUIRE Pre-trained model weights $\Theta$, Dataset $\mathcal{D}$, Hyperparameters.
    \REQUIRE LoRA parameters $\Phi_{\text{LoRA}} = \{(\mathbf{A}_l, \mathbf{B}_l)\}_{l}$, Hypernetwork $\mathcal{F}_\phi$ parameterized by $\phi$.
    \STATE {\bfseries Initialize:} Initialize $\Phi_{\text{LoRA}}, \mathcal{F}_\phi$ as described in Section~\ref{sec:gradient_analysis}.
   
   \REPEAT
   \STATE Sample batch $(\mathbf{p}, \mathbf{r}_0)$ from $\mathcal{D}$
   \STATE Sample $t \sim \mathcal{U}(\varepsilon, 1)$ with $\varepsilon = 10^{-6}$ 
   \STATE Apply masking to $\mathbf{r}_0$ via Eq.~\eqref{eq:forward_process} to obtain binary mask vector $\mathbf{m}$ and masked response $\mathbf{r}_t$
   \STATE Construct input $\tilde{x} \leftarrow [\mathbf{p}, \mathbf{r}_t]$
   
   \STATE $\lambda \leftarrow \|\mathbf{m}\|_1 / |\mathbf{r}_0|$
   \STATE Compute $\mathbf{C}(\lambda)$ via Eq.~\eqref{eq:hypernetwork_computation}
   \STATE Broadcast $\mathbf{C}(\lambda)$ to all layers
   
   \STATE Compute adapter updates $\Delta \mathbf{W}$ via Eq.~\eqref{eq:nara_formulation}
   \STATE Compute loss $\mathcal{L}$ via Eq.~\eqref{eq:dllm_loss}
   \STATE Update $\Phi_{\text{LoRA}}, \phi$ via gradient descent
   
   \UNTIL{convergence}
\end{algorithmic}
\end{algorithm}

\section{Impact of Block-wise Early Termination}
\label{app:early_termination}

In this section, we evaluate the acceleration capabilities of our proposed block-wise early termination strategy. We utilize the LLaDA-8B-Instruct model and conduct experiments across eight standard benchmarks. We report the total inference cost measured in GPU hours required to process the complete validation sets.

Crucially, our experiments confirm that this strategy preserves generation quality with negligible variance. 
Specifically, LoRA maintained identical accuracy across all benchmarks with early stopping. 
For NaRA, performance also remained stable on most datasets, showing only a slight decrease on Winogrande from $0.8145$ to $0.8106$. 
Given the substantial efficiency gains, we consider this trade-off acceptable. 
Table~\ref{tab:gpu_hours_comparison} quantifies the reduction in computational cost. All time-measurement experiments were conducted on a single NVIDIA RTX 3090 GPU. 
Notably, the early termination mechanism reduces the total inference time of NaRA by approximately $81\%$, dropping from $39.17$ to $7.47$ GPU hours. 
The results demonstrate that our strategy significantly reduces inference overhead, enhancing the deployment efficiency of dLLMs.


\input{tabs/app_gpu_hours_comparison}

\section{\edit{Computational Efficiency Benchmarks}}
\label{app:efficiency}

We conduct isolated timing benchmarks on identical inputs with lengths of 256 and 1024. We use 5 warmup passes and then run 50 repeated forward passes on an NVIDIA GeForce RTX 3090. For inference, we compare NaRA with both merged and unmerged LoRA. For training, weight merging is not applicable to LoRA, so we report only the unmerged setting.

\input{tabs/app_per_step_timing}

Table~\ref{tab:per_step_timing} shows that NaRA adds only a small per-step overhead over unmerged LoRA. The extra cost is about 6\,ms for training. Inference remains comparable. The VRAM increase is also negligible and stays below 0.02\,GB. One limitation is that, unlike standard LoRA, NaRA cannot merge its weight update into the base weights. This leads to a small inference cost compared with merged LoRA, but the overhead remains acceptable.

\section{Additional Visualizations of Update Dynamics}
\label{app:math_norm}

To verify that NaRA's advantage is not merely due to a slight parameter budget difference, we evaluate LoRA at rank $r=40$, which has a slightly larger parameter budget (0.52\%) than NaRA at $r=32$ (0.43\%). All other settings are kept identical. We report mean and standard deviation over three random seeds.

\begin{figure*}[h]
    \centering
    \includegraphics[width=\textwidth]{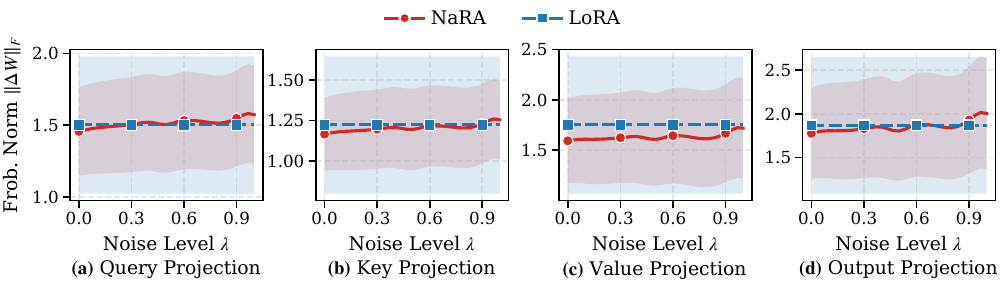}
    \caption{Variation of update Frobenius norms $\|\Delta W(\lambda)\|_F$ on the Math task. Solid lines and shaded regions denote the mean and standard deviation across layers, respectively.}
    \label{fig:nara_norm_math}
\end{figure*}

\section{\edit{Comparison with Higher-Rank LoRA}}
\label{app:higher_rank_lora}

To verify that NaRA's advantage is not merely due to a slight parameter budget difference, we evaluate LoRA at rank $r=40$, which has a slightly larger parameter budget (0.52\%) than NaRA at $r=32$ (0.43\%). All other settings are kept identical. We report mean and standard deviation over three random seeds.

\input{tabs/app_higher_rank_lora_compare}

As shown in Table~\ref{tab:higher_rank_lora}, LoRA with rank 40 improves over rank 32 but still underperforms NaRA at rank 32, confirming that NaRA's gains stem from noise-aware dynamic adaptation rather than a parameter budget advantage.

\section{Detailed Training Settings}
\label{app:training_details}

In this section, we provide a comprehensive account of the implementation details to facilitate reproducibility. We implement standard baselines using the Hugging Face PEFT library, while HiRA is evaluated using its official repository to ensure strict alignment with the original method.

For our proposed NaRA, the specific hyperparameter configurations are summarized in Table~\ref{tab:nara_hyperparams}. These settings are consistently applied across all models and datasets, with one notable exception: for the Commonsense170k dataset, the learning rate is adjusted to 1e-5.

Structurally, NaRA applies adapters to the query, key, value, and output projection layers. The dynamic modulation matrix is generated by a hypernetwork taking Fourier-embedded noise levels as input. It is implemented as an MLP with two hidden layers and one output layer, using SiLU activations. Please refer to Table~\ref{tab:nara_hyperparams} for the specific hidden dimensions.

Regarding optimization, all models are trained using AdamW with FP16 mixed-precision to enhance computational efficiency. We conduct all fine-tuning experiments on NVIDIA GeForce RTX 3090 GPUs, each equipped with 24GB of VRAM. To strictly manage memory consumption within this capacity, we set the per-device batch size to 1 and employ 32 gradient accumulation steps, resulting in an effective global batch size of 32. Training consists of 10 epochs for mathematical reasoning and 1 epoch for other tasks.

\input{tabs/app_nara_hyperparams}





\section{Inference Configuration}
\label{app:inference}

In this section, we detail the inference hyperparameters used for evaluating NaRA. We utilize a Semi-Autoregressive (Semi-AR) decoding strategy for all tasks.

To balance generation quality and efficiency, we adjust the Answer Length, Block Size, and inference steps. Consistent with our experimental design, the number of inference steps is strictly set equal to the defined Answer Length for all evaluations.

Table~\ref{tab:inference_params} summarizes the specific configurations for each dataset. 
To ensure a fair comparison, we largely follow the inference settings from~\citep{nie2025large}.
Notably, while most benchmarks are evaluated in a zero-shot setting to test the model's direct instruction-following ability, we apply 3-shot prompting for MBPP to mitigate the performance degradation observed in zero-shot scenarios for this specific task.

\input{tabs/app_inference_params}

\section{Visualization of Dynamic Matrix Norms under Varying Scaling Factors $\eta$}
\label{sec:app_vis}

In this section, we provide visual evidence supporting the ablation study on the scaling factor $\eta$. We plot the Frobenius norm of the dynamic matrix $C$ across different noise levels to analyze the training dynamics.

Figure \ref{fig:norm_001} illustrates the behavior when $\eta=0.01$. The norm remains nearly flat and indicates that the dynamic modulation is negligible. This confirms that the model behaves like a static LoRA and fails to utilize time-dependent information.

Figure \ref{fig:norm_1} displays the behavior when $\eta=1.0$. The norm exhibits high variance and sharp spikes. These fluctuations indicate unstable parameter updates that hinder the convergence of the diffusion model.

\begin{figure}[!htbp]
    \centering
    \includegraphics[width=\textwidth]{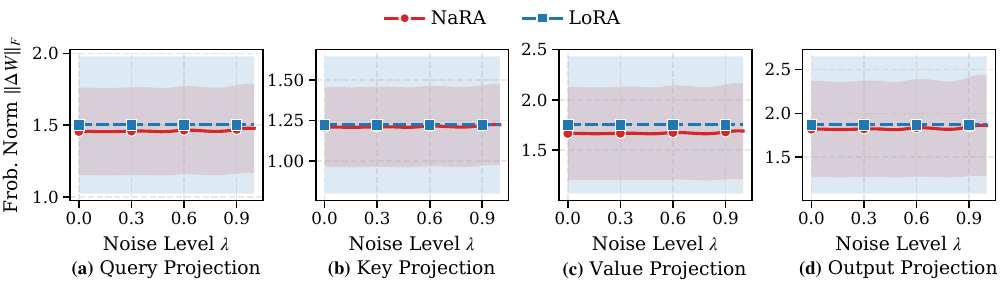}
    \caption{Visualization of the update Frobenius norms $\|\Delta W(\lambda)\|_F$ with $\eta=0.01$. The trajectory is nearly constant and indicates a degeneration to static behavior.}
    \label{fig:norm_001}
\end{figure}

\begin{figure}[!htbp]
    \centering
    \includegraphics[width=\textwidth]{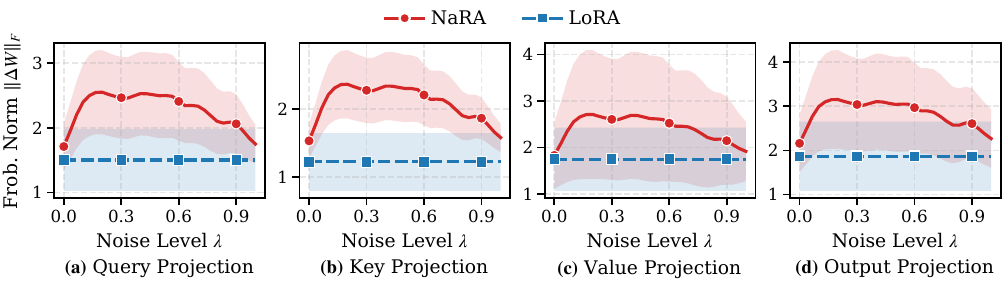}
    \caption{Visualization of theupdate Frobenius norms $\|\Delta W(\lambda)\|_F$ with $\eta=1.0$. The trajectory shows drastic fluctuations and indicates optimization instability.}
    \label{fig:norm_1}
\end{figure}


\section{Hypernetwork Configurations for different ranks}
\label{appendix:hyper_params_for_ranks}
To maintain consistency and stability during training across different rank settings, we modulate the width of the hypernetwork. The specific dimensions and the relative parameter overhead are summarized in Table \ref{tab:hyper_config}.

\input{tabs/app_hyper_config_for_ranks}

\section{Dataset Details}
\label{app:datasets}

In this section, we provide a comprehensive description of the datasets we use.

\subsection{Commonsense Reasoning}
\label{app:commonsense}
For the commonsense reasoning domain, we utilize the Commonsense170k dataset for fine-tuning. This dataset aggregates eight diverse sub-tasks, combining a total of 170,420 query-answer pairs. The evaluation suite covers a wide range of capabilities. BoolQ is a question-answering dataset for yes/no questions containing naturally occurring queries. PIQA focuses on physical commonsense reasoning about everyday situations, while SIQA benchmarks reasoning about social interactions and emotional implications. HellaSwag tests commonsense natural language inference by requiring the model to complete sentences describing everyday events. Winogrande measures robust capabilities in resolving ambiguous pronouns. For science-related reasoning, we use ARC-Challenge and ARC-Easy, where the former specifically targets questions difficult for retrieval-based methods. Finally, OpenBookQA requires multi-step reasoning using open-book general knowledge.

\subsection{Mathematical Reasoning}
\label{app:math}
We employ the Math14k dataset for training mathematical reasoning capabilities. Distinguished from previous iterations, this dataset specifically excludes samples from MAWPS to strictly prevent data contamination with evaluation benchmarks such as AddSub and MultiArith. Instead, it aggregates training samples from GSM8K and AQuA, enriched with high-quality rationales synthesized by ChatGPT and GPT-4. We evaluate performance on four benchmarks. GSM8K contains high-quality linguistically diverse grade school math word problems. AddSub focuses on arithmetic word problems involving addition and subtraction. AQuA consists of algebraic word problems with rationales, requiring complex logical reasoning. MultiArith includes math word problems that require multiple steps of arithmetic operations.

\subsection{Code Generation}
\label{app:code}
For code generation, we utilize the pre-processed version of the CodeFeedback dataset released by~\citep{meng2024pissa}, which has been filtered to retain 104,848 Python-related samples. Building upon this subset, we further conserve computational resources by selecting only those samples with a sequence length of 512 tokens or fewer. This results in a final high-quality dataset of 48,799 examples used for our fine-tuning. Evaluation is performed on HumanEval and MBPP benchmarks to assess the model's proficiency in generating syntactically correct and functional Python code.

\section{Experimental Settings for Loss-Noise Analysis}
\label{app:noise_loss_analysis}

We conduct the loss landscape analysis presented in Figure~\ref{fig:loss_vs_noise_curve} using LLaDA-Instruct~\citep{nie2025large} fine-tuned on the Math14k dataset~\citep{hu2023llm}, as described in Section~\ref{subsec:results_on_math_reasoning}. The evaluation is performed on the test split of the AQuA dataset. For each data instance, we simulate the diffusion corruption process by uniformly sampling a noise level $\lambda \in [0, 1]$. We determine the number of tokens to mask as $m = \lfloor \lambda \cdot L_s \rfloor$, where $L_s$ denotes the length of the response segment, and mask $m$ randomly selected tokens. We then compute the cross-entropy loss over these masked tokens. We repeat this corruption and evaluation process four times for each sample. The resulting noise level and loss pairs are visualized via a scatter plot, overlaid with a Locally Weighted Scatterplot Smoothing (LOWESS) trend line using a smoothing fraction of $0.5$.






\section{\edit{Combination of NaRA with DoRA}}
\label{app:dora_nara}

NaRA is designed as a plug-and-play module that can be composed with other PEFT methods. We instantiate NaRA on top of DoRA~\citep{liu2024dora} by introducing a $\lambda$-conditioned scaling matrix into DoRA's magnitude-direction decomposition. Results on code generation tasks are shown in Table~\ref{tab:dora_nara}.

\input{tabs/app_dora_nara}

The results confirm that NaRA's noise-aware modulation remains effective when combined with DoRA, indicating that NaRA can serve as a general noise-awareness module for various PEFT methods.
\input{tabs/app_nara_c_math}
\section{\edit{Ablation on NaRA-C}}
\label{app:nara_c}

We ablate the necessity of noise conditioning by introducing NaRA-C, a variant in which the core matrix $\mathbf{C}$ is learnable but not conditioned on $\lambda$. All other settings are identical to NaRA.

\input{tabs/app_nara_c_commonsense}
As shown in Table~\ref{tab:nara_c_math} and  Table~\ref{tab:nara_c_commonsense}, NaRA-C is slightly better than LoRA on math tasks but still worse than NaRA, and substantially underperforms both LoRA and NaRA on commonsense tasks. This confirms that noise-level conditioning—not merely an additional learnable matrix—is the key driver of NaRA's gains.

\section{\edit{Qualitative Examples}}
\label{app:qualitative_examples}

We present representative examples of NaRA’s outputs, including both image and text results.

\paragraph{Image Results.}
Figure~\ref{fig:dreambooth_qualitative} shows example images generated under the
DreamBooth subject-driven generation setup.

\begin{figure}[h]
\centering
\includegraphics[width=0.35\linewidth]{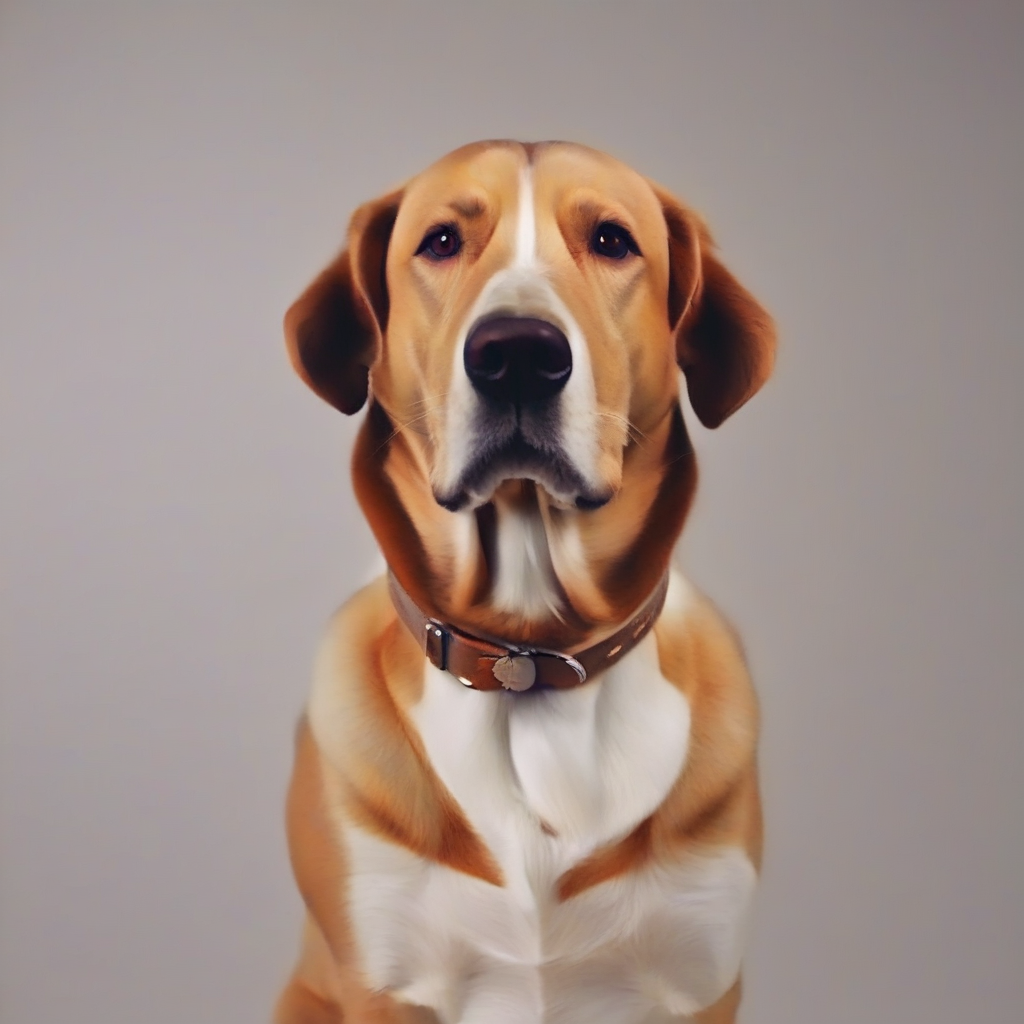}%
\hspace{0.01\linewidth}
\includegraphics[width=0.35\linewidth]{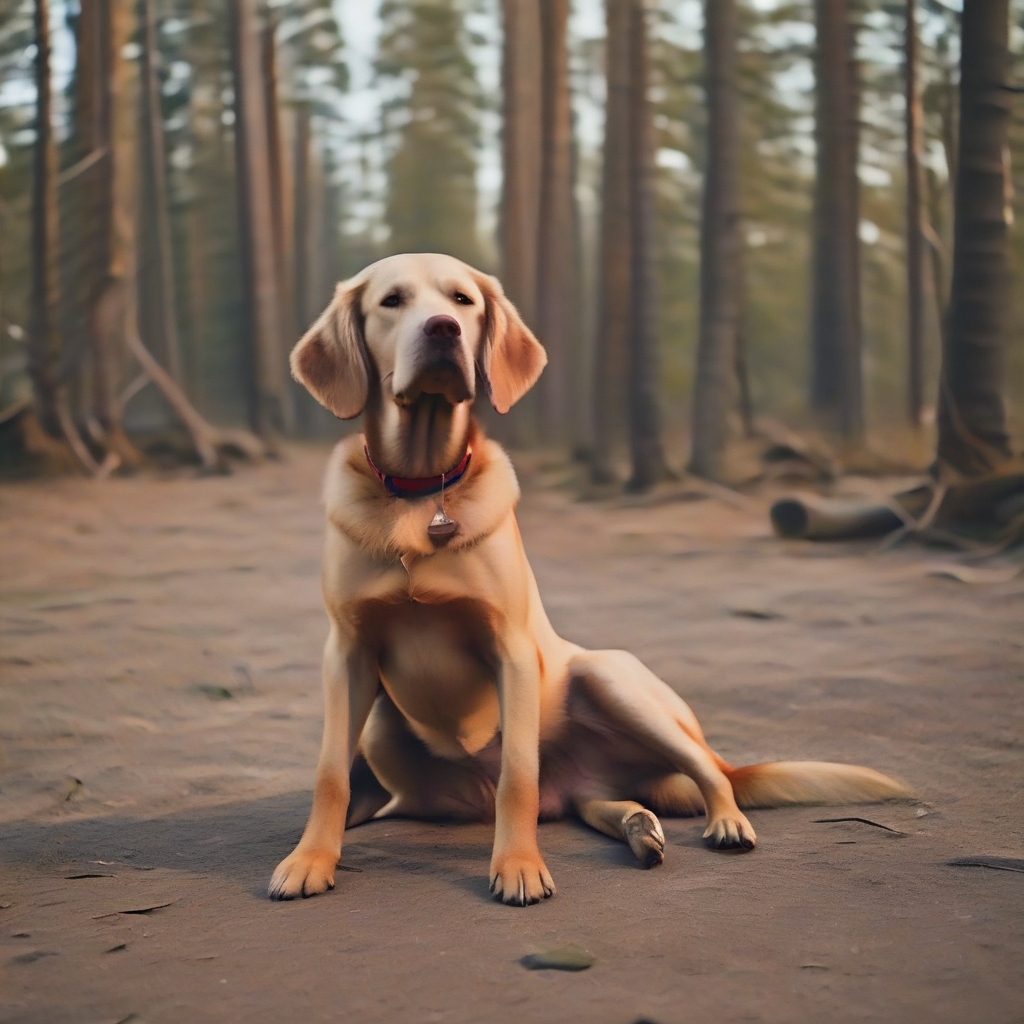}
\caption{Example DreamBooth images of the \texttt{dog6} subject generated by NaRA on SDXL. }
\label{fig:dreambooth_qualitative}
\end{figure}

\paragraph{Code Results.}
\textit{Task}: count how many times a substring appears in a string,
\emph{including overlapping} occurrences.

\begin{naracode}
count = 0
for i in range(n - m + 1):          # advances by 1: all positions checked
    if string[i:i + m] == substring:
        count += 1
return count
# how_many_times('aaaa', 'aa') -> 3  (correct: positions 0, 1, 2)
\end{naracode}
\begin{loracode}
count = 0
i = 0
while i < len(string):
    if string[i:i+len(substring)] == substring:
        count += 1
    i += len(substring)              # skips len(sub) positions: overlaps missed
return count
# how_many_times('aaaa', 'aa') -> 2  (incorrect: only positions 0, 2)
\end{loracode}

\paragraph{Math Results.}
\textit{Task}: Emil is 19 years old. When he turns 24, he will be half his dad's age
and twice his brother's age. What is the \emph{current} sum of his dad's and brother's ages?
(Answer: 50)

\begin{narabox}
\small
When Emil turns 24, he will be $24 - 19 = 5$ years older than now.
At age 24: his dad will be $24 \times 2 = 48$ and his brother $24 / 2 = 12$.
Their \emph{current} ages are $48 - 5 = 43$ and $12 - 5 = 7$.
Sum $= 43 + 7 = \mathbf{50}$.\ \checkmark
\end{narabox}

\begin{lorabox}
\small
\textcolor{red}{When Emil turns 24, he will be $19 + 24 = 43$ years old.}
His dad will then be $43 \times 2 = 86$ and his brother $43 / 2 \approx 21$.
Sum $= 86 + 21 = \mathbf{107}$.\ \texttimes
\end{lorabox}

\textit{Task}: Jim has a 20-pack of gum. He chews 1 piece per 2 hours over an 8-hour
school day, 1 piece on the way home, and 1 piece after dinner. He then gives half
of his remaining gum to his sister. How many pieces does Jim have left?
(Answer: 7)

\begin{narabox}
\small
At school: $8 / 2 = 4$ pieces chewed. \\
On way home and after dinner: $1 + 1 = 2$ pieces chewed. \\
Remaining before giving to sister: $20 - 4 - 2 = 14$ pieces. \\
Given to sister: $14 / 2 = 7$ pieces. \\
Jim has $20 - 4 - 2 - 7 = \mathbf{7}$ pieces left.\ \checkmark
\end{narabox}

\begin{lorabox}
\small
At school: $8 / 2 = 4$ pieces chewed. \\
On way home and after dinner: $1 + 1 = 2$ pieces chewed. \\
\textcolor{red}{He gives away $20 - 4 - 2 = 14$ pieces to his sister.} \\
\textcolor{red}{Jim has 14 pieces of gum left.}\ \texttimes
\end{lorabox}

%% file: tabs/app_gpu_hours_comparison.tex
\begin{table}[h]
\centering
\caption{Comparison of total inference time measured in GPU hours on LLaDA-8B-Instruct validation sets. \textbf{Bold} text indicates minimum inference time.}
\label{tab:gpu_hours_comparison}
\begin{small}
\begin{sc}
\begin{tabular}{l cccc}
\toprule
 & \multicolumn{2}{c}{LoRA (hours)} & \multicolumn{2}{c}{NaRA (hours)} \\ 
\cmidrule(lr){2-3} \cmidrule(lr){4-5}
 & Standard & Early Stop & Standard & Early Stop \\
\midrule
ARC-C       & 1.4407 & 0.3206 & 1.5322 & \textbf{0.3054} \\
ARC-E       & 2.8560 & 0.6335 & 3.1290 & \textbf{0.6078} \\
BoolQ       & 2.6483 & \textbf{0.5985} & 3.3589 & 0.6293 \\
HellaSwag   & 21.2611 & 4.7571 & 23.6502 & \textbf{4.5115} \\
OBQA        & 0.5772 & 0.1278 & 0.6530 & \textbf{0.1224} \\
PIQA        & 2.1707 & 0.5011 & 2.5411 & \textbf{0.4723} \\
SIQA        & 2.2459 & 0.5113 & 2.5948 & \textbf{0.4933} \\
WinoGrande  & 1.3978 & 0.3330 & 1.7098 & \textbf{0.3311} \\
\midrule
\textbf{Total} & 33.1571 & 7.7828 & 39.1691 & \textbf{7.4731} \\
\bottomrule
\end{tabular}
\end{sc}
\end{small}
\vskip -0.1in
\end{table}

%% file: tabs/app_per_step_timing.tex
\begin{table}[h]
\centering
\caption{Per-step timing and peak VRAM for training and inference.}
\vspace{-5pt}
\label{tab:per_step_timing}
\begin{small}
\begin{sc}
\setlength{\tabcolsep}{3pt}
\begin{tabular}{l l cccc}
\toprule
 & & \multicolumn{2}{c}{Training} & \multicolumn{2}{c}{Inference} \\
\cmidrule(lr){3-4} \cmidrule(lr){5-6}
Length & Method & Step Time (ms) & VRAM (GB) & Step Time (ms) & VRAM (GB) \\
\midrule
\multirow{3}{*}{256}
 & LoRA (merged)   & —      & —     & 88.10  & 16.11 \\
 & LoRA (unmerged) & 248.89 & 18.21 & 92.88  & 16.18 \\
 & NaRA (Ours)     & 254.55 & 18.22 & 92.57  & 16.18 \\
\midrule
\multirow{3}{*}{1024}
 & LoRA (merged)   & —      & —     & 331.34 & 16.31 \\
 & LoRA (unmerged) & 768.48 & 23.28 & 344.74 & 16.38 \\
 & NaRA (Ours)     & 770.23 & 23.30 & 344.96 & 16.38 \\
\bottomrule
\end{tabular}
\end{sc}
\end{small}
\end{table}

%% file: tabs/app_higher_rank_lora_compare.tex
\begin{table}[h]
\centering
\caption{Comparison of LoRA (r=32), LoRA (r=40), and NaRA (r=32) on mathematical reasoning benchmarks. Mean and standard deviation are reported over three seeds.}
\vspace{-5pt}
\label{tab:higher_rank_lora}
\begin{small}
\begin{sc}
\setlength{\tabcolsep}{3pt}
\begin{tabular}{l c c cccc c}
\toprule
Method & Rank & Param(\%) & GSM8K & AddSub & AQuA & MultiArith & \textbf{Avg} \\
\midrule
LoRA        & 40 & 0.52 & $\mathbf{79.08 \pm 0.74}$ & $87.17 \pm 0.73$ & $55.91 \pm 2.23$ & $98.61 \pm 0.29$ & 80.19 \\
LoRA        & 32 & 0.42 & $78.96 \pm 0.42$ & $88.38 \pm 0.87$ & $54.45 \pm 1.90$ & $98.50 \pm 0.29$ & 80.07 \\
NaRA (Ours) & 32 & 0.43 & $ 79.03 \pm 0.24$ & $\mathbf{88.53 \pm 0.52}$ & $\mathbf{57.27 \pm 0.20}$ & $\mathbf{98.72 \pm 0.63}$ & \textbf{80.89} \\
\bottomrule
\end{tabular}
\end{sc}
\end{small}
\end{table}

%% file: tabs/app_nara_hyperparams.tex
\begin{table}[h]
\centering
\caption{Hyperparameter configurations for NaRA fine-tuning.}
\label{tab:nara_hyperparams}
\begin{tabular}{ll}
\toprule
\textbf{Hyperparameter} & \textbf{Value} \\
\midrule
\multicolumn{2}{l}{\textit{Adapter Architecture}} \\
Rank ($r$) & 32 \\
Target Modules & $q\_proj, k\_proj, v\_proj, o\_proj$ \\
Dropout & 0.05 \\
Scaling Factor ($\eta$) & 0.1 \\
\midrule
\multicolumn{2}{l}{\textit{Hypernetwork Configuration}} \\
Input Mode & Noise Level \\
Embedding Type & Fourier \\
Embedding Dimension & 64 \\
MLP Hidden Sizes & [256, 512] \\
Initialization Strategy & Zero-last \\
\midrule
\multicolumn{2}{l}{\textit{Optimization}} \\
Optimizer & AdamW \\
Learning Rate & 1e-4 (Commonsense170k: 1e-5) \\
Warmup Ratio & 0.05 \\
Per-Device Batch Size & 1 \\
Gradient Accumulation & 32 \\
Effective Batch Size & 32 \\
Mixed Precision & FP16 \\
\bottomrule
\end{tabular}
\end{table}

%% file: tabs/app_inference_params.tex
\begin{table}[h]
\centering
\caption{Inference hyperparameters across different domains. Steps are set equal to the Answer Length.}
\label{tab:inference_params}
\begin{tabular}{lcccc}
\toprule
\textbf{Domain / Task} & \textbf{Answer Length} & \textbf{Block Size} & \textbf{Shots} \\
\midrule
\multicolumn{4}{l}{\textit{Commonsense Reasoning}} \\
All 8 Sub-tasks & 64 & 4 & 0 \\
\midrule
\multicolumn{4}{l}{\textit{Mathematical Reasoning}} \\
GSM8K & 256 & 8 & 0 \\
AddSub & 256 & 8 & 0 \\
AQuA & 256 & 8 & 0 \\
MultiArith & 128 & 8 & 0 \\
\midrule
\multicolumn{4}{l}{\textit{Code Generation}} \\
HumanEval & 512 & 32 & 0 \\
MBPP & 512 & 32 & 3 \\
\bottomrule
\end{tabular}
\end{table}

%% file: tabs/app_hyper_config_for_ranks.tex
\begin{table}[htbp]
\centering
\caption{
Hypernetwork architectural configurations and parameter efficiency across different ranks $r$. 
We report the dimensions of the embedding layer ($d_{emb}$) and hidden layers ($d_{h1}, d_{h2}$), along with the percentage of trainable parameters.
}
\label{tab:hyper_config}
\begin{small}
\begin{sc}
\setlength{\tabcolsep}{8pt} 
\begin{tabular}{ccccc}
\toprule
Rank ($r$) & $d_{emb}$ & $d_{h1}$ & $d_{h2}$ & Param(\%) \\
\midrule
8  & 4  & 16  & 32  & 0.1046 \\
16 & 16 & 64  & 128 & 0.2094 \\
32 & 64 & 256 & 512 & 0.4252 \\
64 & 128 & 512 & 1024 & 0.8890 \\
\bottomrule
\end{tabular}
\end{sc}
\end{small}
\end{table}

%% file: tabs/app_dora_nara.tex
\begin{table}[h]
\centering
\caption{Combining NaRA with DoRA on code generation. All methods share the same rank r = 32. The “Param” column indicates the percentage of trainable parameters relative to the
base model. \textbf{Bold} indicates the best result, and \underline{underline} indicates the second best.}
\vspace{-5pt}
\label{tab:dora_nara}
\begin{small}
\begin{sc}
\setlength{\tabcolsep}{10pt}
\begin{tabular}{l c ccc}
\toprule
Method & Param(\%) & HumanEval & MBPP & \textbf{Avg} \\
\midrule
DoRA             & 0.423 & 39.02 & 36.70 & 37.86 \\
DoRA+NaRA (Ours) & 0.432 & \textbf{39.84} & \textbf{38.47} & \textbf{39.16} \\
\bottomrule
\end{tabular}
\end{sc}
\end{small}
\end{table}

%% file: tabs/app_nara_c_math.tex
\begin{table*}[h]
\centering
\caption{NaRA-C ablation on mathematical reasoning. All methods share the same rank r = 32. \textbf{Bold} indicates the best result, and \underline{underline} indicates the second best.}
\vspace{-5pt}
\label{tab:nara_c_math}
\begin{small}
\begin{sc}
\setlength{\tabcolsep}{13.5pt}
\begin{tabular}{l cccc c}
\toprule
Method & GSM8K & AddSub & AQuA & MultiArith & \textbf{Avg} \\
\midrule
LoRA          & \underline{78.96} & 88.38          & 54.45          & \underline{98.50} & 80.07          \\
NaRA-C        & 78.43            & \textbf{89.12} & \underline{55.12} & \underline{98.50} & \underline{80.29} \\
NaRA (Ours)   & \textbf{79.03}   & \underline{88.53} & \textbf{57.27} & \textbf{98.72} & \textbf{80.89} \\
\bottomrule
\end{tabular}
\end{sc}
\end{small}
\end{table*}

%% file: tabs/app_nara_c_commonsense.tex
\begin{table*}[h]
\centering
\caption{NaRA-C ablation on commonsense reasoning. All methods share the same rank r = 32. \textbf{Bold} indicates the best result, and \underline{underline} indicates the second best.}
\vspace{-5pt}
\label{tab:nara_c_commonsense}
\begin{small}
\begin{sc}
\begin{tabular}{l ccccccccc}
\toprule
Method & Arc-C & Arc-E & BoolQ & Hella & OBQA & PIQA & SIQA & Wino & \textbf{Avg} \\
\midrule
LoRA          & 75.74          & 87.84          & \textbf{68.02} & \underline{90.52} & \underline{77.87} & \textbf{85.40} & \underline{75.38} & \underline{80.69} & \underline{80.18} \\
NaRA-C        & \underline{79.92} & \underline{88.62} & 55.61       & 75.98          & 74.00          & 83.64          & 63.13          & 24.13          & 68.13          \\
NaRA (Ours)   & \textbf{85.92} & \textbf{95.65} & \underline{67.31} & \textbf{90.89} & \textbf{86.87} & \underline{85.18} & \textbf{79.82} & \textbf{81.06} & \textbf{84.09} \\
\bottomrule
\end{tabular}
\end{sc}
\end{small}
\end{table*}